\newcommand\smaller{\@setfontsize\smaller{8}{9.5}}
\definecolor{codegreen}{rgb}{0,0.6,0}
\definecolor{codegray}{rgb}{0.5,0.5,0.5}
\definecolor{codepurple}{rgb}{0.58,0,0.82}
\definecolor{backcolour}{rgb}{0.95,0.95,0.92}
\lstdefinestyle{mystyle}{
  commentstyle=\color{blue},
  keywordstyle=\color{black},
  basicstyle=\ttfamily\fontsize{8}{9.5}\selectfont,  %
  breakatwhitespace=true,         
  breaklines=true,                 
  captionpos=b,                    
  keepspaces=true,                 
  numbersep=3pt,                  
  showspaces=false,                
  showstringspaces=false,
  showtabs=false,                  
  tabsize=2,
  breakindent=0pt,
  numbersep=3pt,
  numbers=left,
  numberstyle=\tiny\ttfamily\color{codegray},
  aboveskip=-5pt,
  belowskip=-5pt,
}
\newcommand{\bp}{\mathbf{p}}
\newcommand{\bx}{\mathbf{x}}
\newcommand{\bz}{\mathbf{z}}
\newcommand{\CD}{\mathcal{D}}
\newcommand{\CS}{\mathcal{S}}
\newcommand{\CU}{\mathcal{U}}
\newcommand{\CX}{\mathcal{X}}
\providecommand{\norm}[1]{\left\lVert#1\right\rVert}
\newcommand{\bbR}{\mathbb{R}}
\DeclareMathOperator{\softmax}{softmax}
\DeclareMathOperator{\clip}{clip}
\DeclareMathOperator{\logits}{logits}
\DeclareMathOperator{\public}{public}
\newcommand{\hd}{\hat{d}}
\newcommand{\htheta}{\hat{\theta}}
\newcommand{\tz}{\tilde{z}}
\newcommand{\tbz}{\mathbf{\tz}}
\newcommand{\bbaz}{\bar{\bz}}
\newcommand{\bbap}{\bar{\bp}}
\newcommand{\eps}{\varepsilon}
\newtheorem{thm}{Theorem}
\newtheorem{lem}{Lemma}
\newtheorem{defn}{Definition}
\newtheorem{assum}{Assumption}
\title{Private prediction for large-scale synthetic text generation}
\author{First Author \\
  Affiliation / Address line 1 \\
  Affiliation / Address line 2 \\
  Affiliation / Address line 3 \\
  \texttt{email@domain} \\\And
  Second Author \\
  Affiliation / Address line 1 \\
  Affiliation / Address line 2 \\
  Affiliation / Address line 3 \\
  \texttt{email@domain} \\}
  \title{Private prediction for large-scale synthetic text generation\thanks{Authors ordered alphabetically. Author contributions are listed at the end.}}
\author{
    Kareem Amin \quad Alex Bie \quad Weiwei Kong \quad Alexey Kurakin \\ Natalia Ponomareva \quad Umar Syed \quad Andreas Terzis \quad Sergei Vassilvitskii \\
    Google \\
    {\small \texttt{\{kamin, alexbie, weiweikong, kurakin, nponomareva, usyed, aterzis, sergeiv\}@google.com}}
 }
\author{
 \textbf{Kareem Amin\quad Alex Bie\quad Weiwei Kong\quad Alexey Kurakin} \\
 \textbf{Natalia Ponomareva\quad Umar Syed\quad Andreas Terzis\quad Sergei Vassilvitskii} \\
 Google \\
 {\small\texttt{\{kamin,alexbie,weiweikong,kurakin,nponomareva,usyed,aterzis,sergeiv\}@google.com}}
}
\begin{document}
\maketitle
\begin{abstract}
We present an approach for generating differentially private synthetic text using large language models (LLMs), via private prediction. In the private prediction framework, we only require the output synthetic data to satisfy differential privacy guarantees. This is in contrast to approaches that train a generative model on potentially sensitive user-supplied source data and seek to ensure the model itself is safe to release.
We prompt a pretrained LLM with source data, but ensure that next-token predictions are made with differential privacy guarantees. Previous work in this paradigm reported generating a small number of examples ($<$10) at reasonable privacy levels, an amount of data that is useful only for downstream in-context learning or prompting. In contrast, we make changes that allow us to generate thousands of high-quality synthetic data points, greatly expanding the set of potential applications. Our improvements come from an improved privacy analysis and a better private selection mechanism, which makes use of the equivalence between the softmax layer for sampling tokens in LLMs and the exponential mechanism. Furthermore, we introduce a novel use of public predictions via the sparse vector technique, in which we do not pay privacy costs for tokens that are predictable without sensitive data; we find this to be particularly effective for structured data.
\end{abstract}

\section{Introduction}

Differentially private mechanisms process a source dataset potentially containing sensitive user information and perform a useful computation --- as simple as calculating a mean, or as complex as training an ML model --- whose output can be safely shared while protecting the privacy of users who contributed to the dataset.

Perhaps the most general-purpose differentially private mechanism is one that produces a synthetic version of its input dataset, as the output of such a mechanism would be suitable for all the same purposes as the original dataset. For example, a private synthetic dataset can be used to train an ML model, but can also be used for auxiliary tasks such as feature engineering, hyperparameter tuning, and quality monitoring.

There has been recent interest in using large-language models (LLMs) to generate differentially private versions of text datasets. Existing approaches can be classified into several categories. \emph{Private fine-tuning} methods privately adjust the parameters of an LLM on the input dataset, using an algorithm such as differentially private stochastic gradient descent (DP-SGD), and then prompt the LLM to generate similar text. Fine-tuning methods have been used to produce high-quality synthetic data, but the training procedure can be prohibitive, available only to those with the time, compute, and access necessary to train state-of-the-art LLMs containing billions of parameters.

\emph{Private prediction} methods do not modify the LLM parameters at all. Instead, they directly prompt the LLM with text from the source dataset, asking for similar text in response, and then perturb the LLM's token distribution (\emph{i.e.}, its last layer) to ensure that each sampled token, and thus the entire generated response, is private. Since no training is required, private prediction methods can quickly generate synthetic data, typically producing some data within minutes instead of hours, which allows for rapid prototyping and iteration. However, unlike private fine-tuning, the guarantees of private prediction methods degrade with the volume of data that is generated. Consequently, existing private prediction methods have mostly been used in applications that require only small amounts of synthetic data \citep{tang2024privacypreserving}, sharply limiting their practicality.

\begin{figure*}[t]
    \centering
    \includegraphics[width=0.84\textwidth,trim={0cm 0.23cm 0cm 0.23cm},clip]{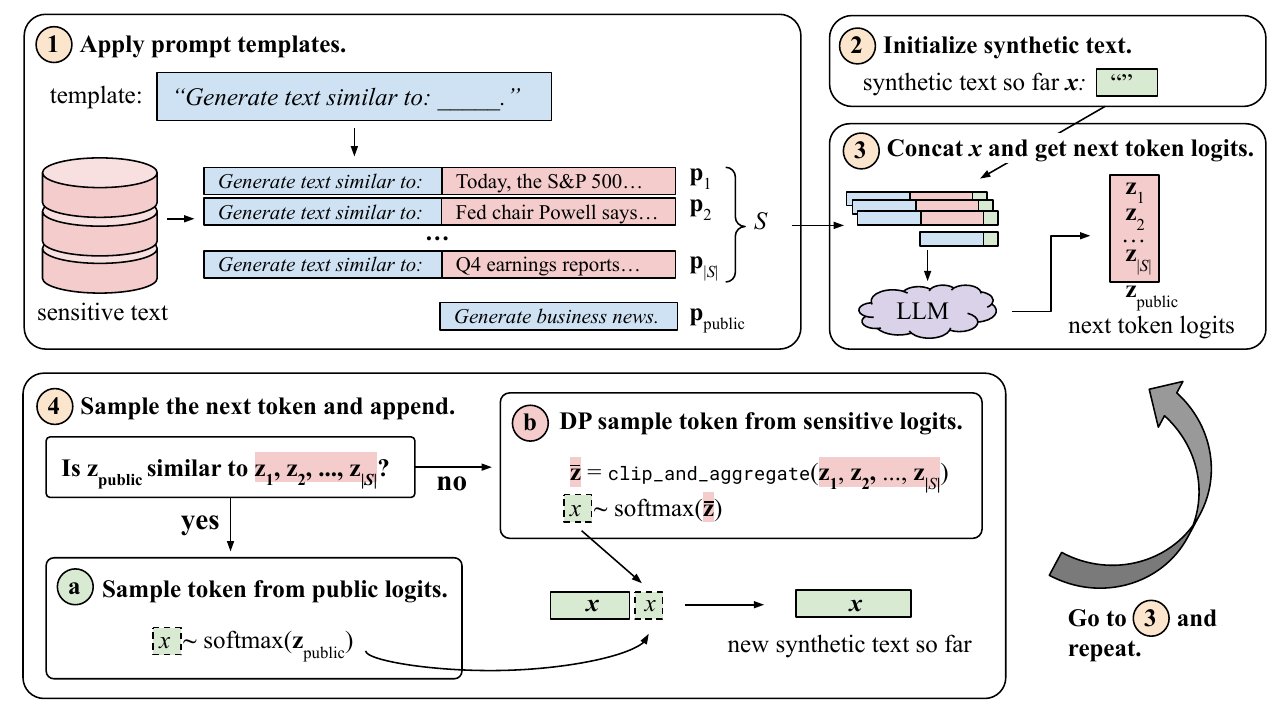}

    \caption{\small Algorithm \ref{alg:main}, visualized. An LLM receives a batch of prompts, each instructing to generate text similar to a piece of sensitive text.  \textit{Synthetic text} is generated token by token, by running inference on the batch in parallel. In each step, the logit vectors produced downstream of sensitive text are aggregated and sampled from with differential privacy. Every token sampled in such way incurs a privacy cost, motivating us to include an auxillary public prompt and sample from its logits when similar to the sensitive logits.}
    \label{fig:diagram}
\end{figure*}

\subsection{Our contributions}

In this paper we describe a new private prediction method that produces hundreds of times as much synthetic data as a state-of-the-art private prediction method, while maintaining a comparable privacy guarantee. Similar to some existing work, our method is based on running LLM inference on several subsets of the input data in parallel and privately aggregating their token distributions to generate synthetic text. However, our approach is distinguished by three novel algorithmic elements that lead to its improved performance:

\paragraph{1. Better private token selection.} Instead of protecting the privacy of the entire token distribution with Gaussian or Laplace noise, we leverage the uncertainty inherent in sampling to ensure privacy. We clip and aggregate token logits before standard softmax sampling --- which is differentially private, since it can be viewed as the exponential mechanism. Our approach induces much less distortion of the original token distributions to achieve the same level of privacy than prior work.

\paragraph{2. Avoiding prefix re-sampling.} Prior work generated each token using a random subset of the input data, leveraging privacy amplification by subsampling in their analysis. This is computationally undesirable, as it requires repeated re-computation of the prefix for each decoding step, thus limiting scalability towards generating large synthetic corpora. Indeed, prior work describes this re-sampling as the ``main weakness'' of the approach \cite{tang2024privacypreserving}. To resolve the problem, we instead generate each synthetic example using a fixed disjoint subset of the input data, which yields substantial savings in privacy cost -- via \emph{parallel composition} -- while allowing us to pay linear instead of quadratic non-attention FLOPs in terms of sequence length via KV cache accelerated decoding.

\paragraph{3. Leveraging public predictions.} Our method uses an auxillary token distribution from an LLM without access to sensitive data, and draws the next token from that distribution whenever it is very similar to the token distribution induced by the sensitive data. Our method incurs no privacy cost when outputting “obvious” tokens, and as a result, only a fraction of the tokens in the synthetic data are generated using sensitive data (as little as 20\% in structured datasets). We leverage the sparse vector technique to privately calculate distributional similarity.

\paragraph{} Taken together, the combination of these algorithmic techniques leads to significant improvements over prior work. Roughly speaking, (1) and (2) above keep our inference  closely aligned to standard (non-DP) inference. 

In our experiments, we generate private synthetic versions of publicly available, benchmark machine learning datasets, and then use the synthetic datasets for downstream classification and extraction tasks. Owing to the increased quantity and quality of our synthetic data, we improve over an existing state-of-the-art private prediction method in terms of downstream accuracy. 
Furthermore, while prior work in this paradigm only generated a small (<10) number of examples, we demonstrate the ability to generate thousands of training examples, enough for fine-tuning downstream models.

Finally, since synthetic data is intended for a wide variety of applications, we also evaluate data quality using a metric that is orthogonal to performance on downstream tasks. Specifically, we generate synthetic versions of a publicly available dataset containing highly structured data records, each of which is encoded as a JSON object. Our results demonstrate that the sparse vector technique helps preserve data structure at high privacy levels.

\section{Related work}

\emph{Private fine-tuning} is widely used for synthetic text generation. \citet{yue-etal-2023-synthetic} created private synthetic versions of text datasets by using DP-SGD \citep{abadi2016deep} to fine-tune an LLM on the sensitive data. \citet{kurakin2024harnessing} showed that parameter efficient approaches to fine-tuning, such as LoRA \citep{hu2022lora} can improve the quality of the synthetic data. \citet{wu2024prompt} took a two-stage approach: First they fine-tuned an LLM on a public dataset that closely resembled the sensitive data (which was itself generated by an LLM using carefully designed prompts); then they completed the fine-tuning process by running DP-SGD on the sensitive dataset. Concurrent to the present work, \citet{tran2024differentially} describe a private fine-tuning approach for generating synthetic tabular data that is formatting compliant.

\emph{Private prediction} \citep{pmlr-v75-dwork18a} is an alternate approach to private machine learning that only guarantees the privacy of the predictions output by an ML model, and not the model itself. The predominant way this is realized is via \emph{subsample-and-aggregate} \citep{nrs07}: First sensitive data is split into disjoint partitions; then non-private predictions are made from each partition and privately aggregated. PATE \citep{papernot2017semi, papernot2018scalable} employs this approach to get answers to a limited set of image classification queries, which are then used to train a student model that can be queried indefinitely. 

Private prediction has been applied to synthetic text generation by viewing each token sampled by an LLM as a `prediction', and perturbing the LLM's token distributions to ensure their privacy.  \citet{tang2024privacypreserving} added noise to several independent token distributions and averaged them, while \citet{hong2024dpopt} selected the most popular token among the token distributions using the LimitedDomain mechanism \citep{durfee2019practical}. These methods can avoid the time, compute, and access required to fine-tune an LLM with billions of parameters. However, a privacy loss is suffered for each \emph{token} produced in this manner. As a result, previous work has only been able to generate a very small number of synthetic examples at reasonable privacy levels (fewer than 10). Other work has applied private prediction techniques to LLMs \citep{ Majmudar2022, NEURIPS2023_f26119b4}, including in combination with fine-tuning \citep{ginart2022submix,flemings2024differentially}, but not for the purpose of synthetic text generation.

Finally, another distinct set of approaches are \emph{private filtering} methods. Private filtering methods operate directly on whole LLM responses and a large corpus of public data that does not require protection. \citet{yu2024privacypreserving} and \citet{xie2024differentially} used the sensitive responses to privately select similar responses from the public dataset. Similarly, \citet{wu2024privacypreserving} aggregate response embeddings and select the public response that is closest in embedding space.\footnote{\citet{wu2024privacypreserving} also proposes a non-filtering approach based on privately selecting common keywords among the sensitive data and using them to prompt an LLM.} One limitation of filtering methods is that the menu of possible responses is constructed without signal from the new source dataset.

\section{Method}

\subsection{Standard LLM inference} 

Before describing our algorithm for generating private synthetic text, we review the standard algorithm for LLM inference. Let $\CX$ be the token vocabulary (\emph{i.e.}, the set of all possible tokens), and let $v = |\CX|$ be the vocabulary size. A \emph{token sequence} is an element of $\CX^*$, and a \emph{logit vector} is an element of $\bbR^v$ (one logit per token in the vocabulary). If $\bx_1$ and $\bx_2$ are token sequences then we write $\bx_1\bx_2 \in \CX^*$ to denote their concatenation.
   
A decoder-only LLM can be viewed as a function $\logits : \CX^* \rightarrow \bbR^v$ that maps each token sequence to a logit vector. Standard LLM inference generates a \emph{response} $\bx \in \CX^*$ by initializing $\bx = \bp$, where $\bp \in \CX^*$ is the \emph{prompt}, and then repeatedly executes the following procedure: (1) Let $\bz = \logits(\bx)$; (2) draw token $x$ from $\softmax(\bz / \tau)$; and (3) append $x$ to $\bx$. Here $\softmax(\bz/\tau)$ is the distribution that assigns probability proportional to $\exp(z_i/\tau)$ to the $i$th token, and $\tau > 0$ is a \emph{temperature} parameter that flattens or sharpens the distribution. The procedure terminates when $x = \texttt{<eos>}$, a special token that indicates the end of the response.

\subsection{Our algorithm} 
One straightforward approach to generating a synthetic version of a sensitive piece of text would be to prompt an LLM with `{\it Please generate text similar to: <sensitive text>}'. However, this could easily lead to a privacy violation, as the response could retain the semantics of the input sensitive text. 

Algorithm \ref{alg:main} describes our method for privately generating a dataset of synthetic examples $X$ from a dataset of sensitive prompts $D$. Each prompt in $D$ resembles the sample prompt given above. But instead of using a \emph{single} prompt to generate a synthetic example, Algorithm \ref{alg:main} takes a \emph{batch} of prompts $S$ and runs LLM inference in parallel on each prompt. A synthetic example is generated one token at a time, with the average of the logit vectors across the batch defining the distribution from which the next token is randomly selected. Before averaging, a logit vector $\bz$ is clipped and re-centered using the function
\begin{equation}
    \clip_c(\bz)_i = \max\{-c, \bz_i - \max_j {\{\bz_j\}} + c\} \label{eq:clip}
\end{equation}
which maps each component $i$ of $\bz$ into the target clipping range $[-c, c]$. Forcing each logit to lie in a bounded range is key to proving the privacy guarantee for our algorithm (see \S\ref{sec:analysis}). While several functions can achieve this purpose, Eq.~\eqref{eq:clip} has an additional desirable property: If the components of $\bz$ can be shifted by a constant so that they all lie in the interval $[-c, c]$, then $\clip_c(\bz)$ is one such shift. This property is desirable because the distribution $\softmax(\bz)$ is invariant to any constant shift of $\bz$. Empirically, we found that Eq.~\eqref{eq:clip} performed better than other functions considered. For example, regular clipping to the range $[-c,c]$ without recentering requires twice as large $c$ to sample without distortion (see \S\ref{appendix:design-choices:clipping}).

\begin{center}
\begin{algorithm}[!t]
\small
\setstretch{1.15}
\caption{\label{alg:main} Generate private synthetic examples}
\begin{algorithmic}[1]
\Statex {\bf Parameters:} LLM $\logits(\cdot)$, public prompt $\bp_{\public}$, expected batch size $s$, private tokens to sample $r$. \emph{Sampling}: clipping bound $c$, temperature $\tau$, public temperature $\tau_{\public}$. \emph{Public optionality}: public/private distribution distance $\mathrm{d}(\cdot,\cdot)$, threshold $\theta$, noise level $\sigma$.   
\Statex {\bf Input:} Dataset of sensitive prompts $D$; each prompt contains a sensitive example
\Statex {\bf Output:} Dataset of synthetic examples $X$
\State $X \gets \emptyset$
\State Let $\CS$ be a partition of $D$ into disjoint batches
\For{each batch $S \in \CS$}
\State $\htheta \gets \theta + \textrm{Laplace}(\sigma)$\quad{\smaller\color{blue}\texttt{\# Init noisy threshold}}
\State $t \gets 0$\quad{\smaller\color{blue}\texttt{\# Private token counter}}
\While{$t < r$}
\State $\bx \gets \textrm{Empty token sequence}$
\While{$\bx$ does not end with \texttt{<eos>}}
\State $Z \gets \{\logits(\bp\bx) : \bp \in S\}$
\State $\bz_{\public} \gets \logits(\bp_{\public}\bx)$

\qquad\hspace{6pt}{\smaller\color{blue}\texttt{\# Check if pub/priv distributions are far}}
\State $\hd \gets \mathrm{d}(Z, \bz_{\public}) + \textrm{Laplace}(2\sigma)$
\If{$\hd \geq \htheta$}\quad{\smaller\color{blue}\texttt{\# Sample priv token}}
\State $\bbaz \gets \frac1s \sum_{\bz \in Z} \clip_c(\bz)$
\State $x \sim \softmax(\bbaz / \tau)$
\State $t \gets t + 1$
\State $\htheta \gets \theta + \textrm{Laplace}(\sigma)$
\Else\quad{\color{blue}\smaller\texttt{\# Sample pub token}}
\State $x \sim \softmax(\bz_{\public} / \tau_{\public})$ 
\EndIf
\State Append $x$ to $\bx$
\EndWhile
\State $X \gets X \cup \{\bx\}$
\EndWhile
\EndFor
\State \textbf{return} $X$
\end{algorithmic}
\end{algorithm}
\end{center}

Since the average logit vector is computed using sensitive prompts, each token selected from a distribution determined by the average logit vector incurs a privacy cost. To minimize this cost, Algorithm \ref{alg:main} also has access to a non-sensitive public prompt, $\bp_{\public}$, and uses this prompt to generate the next token whenever doing so does not significantly change the distribution from which the next token is drawn. The distance function used to make this determination is 
\begin{equation}
    \mathrm{d}(Z, \bz_{\public}) = \|\frac 1 s \sum_{\bz \in Z} p_\bz - p_{\bz_{\public}}\|_1, \label{eq:distance}
\end{equation}
where $p_\bz:=\softmax(\bz)$, $Z$ are the logit vectors computed for each sensitive prompt in $S$, $\bz_{\public}$ is the logit vector computed using $\bp_{\public}$, and $s$ is the expected batch size. When this distance is small, Algorithm \ref{alg:main} outputs a public token instead of a private token. The privacy guarantee for Algorithm \ref{alg:main} leverages the analysis of the sparse vector technique \citep{dwork2009complexity}, and shows that while privacy degrades with the number of private output tokens, it is independent of the number of public output tokens (see \S\ref{sec:analysis}). Empirically, we observe that the fraction of output tokens that must be private in order to generate high-quality synthetic data can be as low as 20\% for highly structured datasets.

Note that the first step of Algorithm \ref{alg:main} partitions the input dataset of sensitive prompts into disjoint batches. We do not prescribe a procedure for assigning prompts to batches in Algorithm \ref{alg:main} since many batching approaches are admissible as long as they satisfy a minor technical assumption required for the privacy analysis of Algorithm \ref{alg:main}, which we explain in \S\ref{sec:analysis}.  
While the batches are not required to be any particular size, the algorithm runs faster if each batch has size equal to the expected batch size $s$. And while prompts can be batched together (almost) arbitrarily, more tailored batching can lead to better synthetic data quality. For example, in the experiments in \S\ref{sec:experiments}, where we generate synthetic versions of ML training datasets, each sensitive prompt contains a label. In those experiments we assign prompts with the same label to the same batch.

\subsection{Comparison to prior algorithms}
Two major features of Algorithm \ref{alg:main} are that it leverages the inherent randomness of token sampling to guarantee privacy, and that it further reduces privacy cost by using public data to generate a portion of the synthetic data. Some prior work also incorporated these algorithmic ideas, but with key differences. Instead of clipping logits to ensure that the token sampling is private, \citet{Majmudar2022} mixed each sensitive token distribution with the uniform distribution. This approach induced a dependence on the vocabulary size in their privacy guarantee, and since LLM vocabularies are typically very large, the resulting privacy guarantee was quite weak: \citet{Majmudar2022} noted that setting the differential privacy parameter $\varepsilon$ lower than 50 produced synthetic data that was ``unusable''. \citet{flemings2024differentially} guaranteed the privacy of token sampling by mixing each sensitive token distribution with a public token distribution, but their approach was based on aggregating a set of fine-tuned models, not a set of prompts. Neither \citet{Majmudar2022} nor \citet{flemings2024differentially} aim to generate synthetic data.

\citet{tang2024privacypreserving} found that limiting the token vocabulary to a fixed set of the most popular 100 public tokens caused their synthetic data generation algorithm to exhibit greater stability. However, if the sensitive data contains many tokens that are rare in public data, their approach cannot produce synthetic data that is very similar to the sensitive data. By contrast, our approach compares public and private token distributions on-the-fly, and determines which one to use for sampling the next token by balancing a trade-off between privacy and quality. Also, \citet{tang2024privacypreserving} used a different random subset of prompts to generate each token, and left as an open problem how to use a single subset to generate every token in a synthetic example. Our algorithm resolves this open problem, and consequently yields both improved privacy and greater computational efficiency (see \S\ref{sec:discussion}).

\section{Privacy analysis}
\label{sec:analysis}

In this section we state formally how Algorithm \ref{alg:main} preserves the privacy of the sensitive prompts it uses to generate synthetic examples. 

Let $\CD$ be the set of all possible prompt datasets. A \emph{mechanism} is a randomized function with domain $\CD$. Note that Algorithm \ref{alg:main} is a mechanism. We say that a pair of prompt datasets $D, D' \in \CD$ are \emph{neighbors} if there exists a prompt $\bp$ such that $D = D' \cup \{\bp\}$ or $D' = D \cup \{\bp\}$. In the differential privacy literature this is commonly referred to as the \emph{add/remove} neighbor relation. 

\begin{defn}[\citet{dwork2006our}] \label{defn:dp} A mechanism $M$ satisfies \emph{$(\eps, \delta)$-differential privacy} if
$\Pr[M(D) \in O] \le e^\eps \Pr[M(D') \in O] + \delta$ for any neighboring datasets $D, D' \in \CD$ and subset $O$ of the range of $M$.
\end{defn}

Theorem \ref{thm:main} below provides a differential privacy guarantee for Algorithm \ref{alg:main}. The proof of Theorem \ref{thm:main} requires a technical assumption about how the prompts are partitioned into batches in the first step of the algorithm.

\begin{assum} \label{assum:batch}
In Algorithm \ref{alg:main}, the assignment of a prompt to a batch depends only on the prompt itself, and not on the other prompts. \end{assum}

The most straightforward way to satisfy Assumption \ref{assum:batch} is to apply a hash function to each prompt and then use the hash value to determine its assigned batch. For example, if $h$ is the hash value, $n$ is the number of prompts and $s$ is the expected batch size, then we can assign the prompt to the $(h \mod \frac{n}{s})$th batch. If we want to batch together prompts that share a certain attribute (like a label), we can apply another hash function to that attribute and concatenate the hash values. Using hash functions for batch assignment can lead to batches whose sizes differ from the expected batch size $s$, but this does not impact the validity of Theorem \ref{thm:main}.

\begin{thm}[Privacy of Algorithm \ref{alg:main}] \label{thm:main} Suppose Assumption \ref{assum:batch} holds. Let $\rho = r\left(\frac12 \left(\frac{c}{s\tau}\right)^2 + \frac{2}{(s\sigma)^2}\right)$. For all $\eps \ge 0$, Algorithm \ref{alg:main} satisfies $(\eps, \delta)$-differential privacy, where 
$$\delta = \inf_{\alpha \in (1,\infty)} \frac{e^{(\alpha -1)(\alpha \rho - \varepsilon)}}{\alpha-1} \left( 1- \frac{1}{\alpha} \right)^\alpha.$$
Also, for all $\delta \in (0, 1]$, Algorithm \ref{alg:main}
satisfies $(\eps, \delta)$-differential privacy, where 
$$\eps = \rho + \sqrt{4\rho \log(1/\delta)}.$$
\end{thm}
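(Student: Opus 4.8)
The plan is to cast the analysis in the zero-concentrated DP (zCDP) framework and decompose the mechanism twice: first across batches via \emph{parallel composition}, then, within one batch, across the interleaved sparse-vector rounds and softmax draws via adaptive composition. By Assumption~\ref{assum:batch}, inserting or deleting one prompt $\bp$ changes the input of exactly one batch and leaves the computation on every other batch unchanged, so the privacy loss of Algorithm~\ref{alg:main} equals that of processing the single affected batch. Processing a batch is, in order, a sequence of AboveThreshold rounds — a run of below-threshold steps, each emitting a public token, capped by one above-threshold step that emits a private token — and exactly $r$ such rounds occur, since the counter $t$ is incremented only on private tokens and the loop runs until $t=r$. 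I would therefore view a batch's output as the adaptive composition of $r$ sparse-vector rounds (each bundled with the public tokens it emits) and $r$ exponential-mechanism steps, and bound the zCDP of each.

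For a softmax draw, the crucial observation is that sampling $x \sim \softmax(\bbaz/\tau)$ with $\bbaz = \frac1s\sum_{\bz\in Z}\clip_c(\bz)$ \emph{is} the exponential mechanism with score $\bbaz_i/\tau$. Because $\clip_c$ pins every coordinate of every sensitive logit vector into $[-c,c]$ and the sum is rescaled by the fixed $1/s$, inserting $\bp$ perturbs $\bbaz$ by $\frac1s\clip_c(\logits(\bp\bx))$, a vector in $[-c/s,c/s]^v$; hence the per-coordinate score sensitivity is $c/(s\tau)$ and the mechanism is $\eps_0$-DP with $\eps_0 = 2c/(s\tau)$. I would not stop at pure DP, though: for output $i$ the privacy loss between neighbours equals the $i$-th coordinate of $\pm\tfrac1{s\tau}\clip_c(\logits(\bp\bx))$ plus a term independent of $i$, so it ranges over an interval of width at most $2c/(s\tau)=\eps_0$ — i.e.\ the exponential mechanism is \emph{bounded-range} — and the bounded-range refinement of its privacy analysis upgrades this to $\tfrac18\eps_0^2 = \tfrac12(c/(s\tau))^2$-zCDP per private token. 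This fourfold improvement over the naive $\tfrac12\eps_0^2$ bound is precisely what is needed to reach the stated $\rho$, and I expect making the bounded-range step airtight to be the main obstacle.

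For a sparse-vector round, I would first check that $\mathrm d(Z,\bz_{\public}) = \|\frac1s\sum_{\bz\in Z} p_\bz - p_{\bz_{\public}}\|_1$ has $\ell_1$-sensitivity $1/s$ under add/remove within a batch: inserting $\bp$ moves $\frac1s\sum_{\bz} p_\bz$ by $\frac1s p_{\logits(\bp\bx)}$, whose $\ell_1$-norm is $1/s$, and the triangle inequality finishes it. With threshold noise $\mathrm{Laplace}(\sigma)$ and query noise $\mathrm{Laplace}(2\sigma)$ — twice the threshold noise — the standard AboveThreshold analysis makes one round $\eps_1$-DP with $\eps_1 = 2/(s\sigma)$, and, crucially, this cost does not grow with the number of below-threshold steps, so the unbounded stream of public tokens is harmless. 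Each public token is drawn from $\softmax(\bz_{\public}/\tau_{\public})$, a function of the below-threshold indicator, the public prompt, and the already-emitted prefix only, so emitting it is post-processing. Pure $\eps_1$-DP gives $\tfrac12\eps_1^2 = 2/(s\sigma)^2$-zCDP per private token.

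It remains to assemble and convert. Adaptive composition of zCDP adds parameters, so a batch — and hence, by parallel composition, Algorithm~\ref{alg:main} — is $\rho$-zCDP with $\rho = r\big(\tfrac12(c/(s\tau))^2 + 2/(s\sigma)^2\big)$. Since $\rho$-zCDP implies $(\alpha,\alpha\rho)$-R\'enyi DP for every $\alpha\in(1,\infty)$, applying the optimal R\'enyi-DP-to-approximate-DP conversion — $(\alpha,\eta)$-RDP implies $(\eps,\delta)$-DP with $\delta = \frac{e^{(\alpha-1)(\eta-\eps)}}{\alpha-1}\big(1-\tfrac1\alpha\big)^\alpha$ — with $\eta=\alpha\rho$ and taking the infimum over $\alpha$ yields the first displayed bound, while the standard zCDP-to-DP conversion yields $\eps = \rho + \sqrt{4\rho\log(1/\delta)}$.
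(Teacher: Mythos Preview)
Your proposal is correct and matches the paper's proof essentially step for step: parallel composition across batches via Assumption~\ref{assum:batch}, decomposition of a single batch into $r$ AboveThreshold rounds (each $2/(s\sigma)^2$-zCDP) and $r$ exponential-mechanism draws (each $\tfrac12(c/(s\tau))^2$-zCDP via the bounded-range refinement), sequential zCDP composition, and conversion to $(\eps,\delta)$-DP via \citet{canonne2020discrete} and \citet{bun2016concentrated}. The bounded-range step you flag as the main obstacle is handled in the paper by citing \citet{cesar2021bounding} directly.
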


The proof is in \S\ref{appendix:proof} and makes use of sharp privacy analyses of: (1) zCDP to approximate DP conversion \citep{canonne2020discrete}; and (2) zCDP bounds for the exponential mechanism \citep{cesar2021bounding}.

\section{Experiments}
\label{sec:experiments}

Gemma 1.1 2B IT \citep{gemmateam2024gemma} is the data generator in our main private prediction experiments. We choose it due to its lightweight, open-source JAX implementation\footnote{\url{https://github.com/google-deepmind/gemma}} that makes easy to implement and share sampling algorithms. Tables \ref{tab:datasets} and \ref{tab:models} give an overview of datasets and models used.

\begin{table}[!ht]
\centering
    \begin{subtable}{0.48\textwidth}
    \small
    \centering
    \scalebox{0.95}{
    \begin{tabular}{lrl}
    \toprule
    Dataset & $n_\text{train}$ & Description\\
    \midrule
    AGNews & 120,000 & 4-way news classification\\
    TREC & 5452 & 6-way query classification\\
    DBPedia & 560,000 & 14-way topic classification\\
    MIT-G &  2,953 & Movie genre extraction\\
    MIT-D & 1,561 &  Movie director extraction\\
    IMDB & 25,000 & 2-way review classification\\
    Yelp & 560,000 & 2-way review classification\\
    WikiMoviesJSON & 27,412 & JSON with 6 fields\\
    \bottomrule
    \end{tabular}
    }
    \caption{\small Overview of datasets used.}
    \label{tab:datasets}
    \end{subtable}
    \qquad
    \begin{subtable}{0.5\textwidth}
    \small
    \centering
    \vspace{10pt}
    \scalebox{0.95}{
    \begin{tabular}{ll}
    \toprule
    Model  & Usage\\
    \midrule
    Gemma 1.1 2B IT & Generation; private prediction \\
    LaMDA 8B & Generation; DP fine-tuning \\
    \midrule
    GPT-3 babbage-002 & Evaluation; in-context learning \\
    BERT-Base 12/768 110M  & Evaluation; fine-tuning \\
    \bottomrule
    \end{tabular}
    }
    \caption{\small Overview of models used in main experiments.}
    \label{tab:models}
    \end{subtable}
    \caption{\small Overview of datasets and models used in our main experiments. Datasets are benchmark classification and extraction tasks used in prior work on private synthetic text generation, with the exception of WikiMoviesJSON, which is used for structured data experiments. LaMDA and Gemma are used for synthetic data generation, while the other models are used to measure how useful our synthetic data is for improving accuracy on real test data.}
\end{table}

We perform 3 sets of experiments, targeting various datasets and utility criteria:

\paragraph{In-context learning (\S\ref{sec:experiments:icl}).} We generate examples to use as in-context exemplars for prompting an LLM. We report downstream accuracy on real test examples, when prompted with synthetic data, on 3 classification tasks (\emph{AGNews} \citep{zhang2015character}, \emph{DBPedia} \citep{zhang2015character}, \emph{TREC} \citep{voorhees2000building}) and 2 extraction tasks (\emph{MIT-G}, \emph{MIT-D} \citep{liu2012conversational}). 

\paragraph{Fine-tuning (\S\ref{sec:experiments:fine-tuning}).} We generate synthetic examples to use for fine-tuning a BERT classifier. We report downstream accuracy on real test examples for 3 classification tasks (\emph{IMDB} \citep{maas2011learning}, \emph{Yelp} \citep{zhang2015character}, \emph{AGNews} \citep{zhang2015character}). 

\paragraph{Structured data (\S\ref{sec:experiments:structured}).} We generate examples that must adhere to structural constraints to be useful synthetic data. We consider a JSON generation task (\emph{WikiMoviesJSON} \citep{rust2024wikipediamoviedata}), evaluating structure preservation.

\subsection{In-context learning}\label{sec:experiments:icl}

\paragraph{Experimental setup.} Using our method, we generate 90-1500 examples using Gemma 1.1 2B IT. We compare against real examples, and results reported in the prior work of \citet{tang2024privacypreserving}, where they generated 4-shot examples for in-context learning.\footnote{It is no longer possible to reproduce their results, due to changes in the OpenAI API since publication: GPT-3 babbage is now deprecated, and it is no longer possible to query for top 100 logprobs, which is required by their method.}
To evaluate generated synthetic data, we put synthetic examples in the context window before querying with the real test example, as shown in Figure \ref{fig:icl-evaluation}.

\begin{figure}[!ht]

\centering
\begin{minipage}{0.47\textwidth}
\begin{center}
\begin{tcolorbox}
\smaller
\begin{lstlisting}[language=Python]
Classify the following examples:
Text: lorem ipsum   # synthetic text 1
Answer: label
# ...
Text: sed do eiusm  # synthetic text n
Answer: label

Text: excepteur si  # test text
Answer: 
\end{lstlisting}
\end{tcolorbox}
\end{center}
\end{minipage}
\caption{\small Example of $n$-shot in-context learning evaluation for synthetic data.}
\label{fig:icl-evaluation}
\end{figure}

\begin{table*}[t!]\centering
    \scalebox{0.75}{
    \centering
    \tabcolsep=0.1cm
    \begin{tabular}{lllllrrrrr}
    \toprule
    & & & & &\multicolumn{5}{c}{GPT-3 babbage-002 Acc. (\%)\textbf{*}} \\
    \cmidrule(r){6-10}
    \multicolumn{1}{c}{$\varepsilon$} & Method & Shots & Reported in & Model &\multicolumn{1}{c}{AGNews} &\multicolumn{1}{c}{DBPedia} & \multicolumn{1}{c}{TREC} & \multicolumn{1}{c}{MIT-G} & \multicolumn{1}{c}{MIT-D}\\
    \midrule
    0 & Zero shot & 0 & This work & - & $24.8_{0.0}$ & $12.0_{0.0}$ & $28.4_{0.0}$ & $29.6_{0.0}$ & $28.8_{0.0}$  \\
    \midrule
    \multirow{6}{*}{$\infty$} & \multirow{2}{*}{Real data} & 4 & \multirow{2}{*}{This work} & \multirow{2}{*}{-} & $75.3_{3.0}$ & $73.6_{0.3}$ & $34.9_{5.0}$ & $56.0_{2.0}$ & $83.1_{5.3}$  \\
    &  & 64 & &  & $84.7_{1.5}$  & $92.5_{1.6}$& $50.3_{6.1}$ & $56.4_{5.4}$ & $89.1_{0.7}$ \\
    \cmidrule(r){2-10}
    & \multirow{1}{*}{\citet{tang2024privacypreserving}} & \multirow{1}{*}{4}& \citet{tang2024privacypreserving}* &  GPT-3 babbage & $69.3_{4.8}$ & $82.3_{3.7}$ & $50.6_{6.9}$ & $54.4_{7.0}$ & \multicolumn{1}{c}{-}\\
    \cmidrule(r){2-10}
    & \multirow{2}{*}{Ours} & 4 & \multirow{2}{*}{This work} & \multirow{2}{*}{Gemma 1.1 2B IT} & $76.8_{4.8}$ & $72.3_{2.5}$ & $38.8_{6.0}$ &  $47.7_{2.5}$& $81.7_{2.4}$  \\
    &  & 64 & &  & $77.5_{1.8}$ & $91.5_{1.7}$ & $57.9_{3.4}$ & $56.4_{1.2}$  &  $87.1_{0.2}$\\
    \midrule
    \multirow{4}{*}{$1$} & \multirow{2}{*}{\citet{tang2024privacypreserving}} & \multirow{1}{*}{4} & \citet{tang2024privacypreserving}* &  GPT-3 babbage &  $64.1_{3.9}$ & $81.2_{3.0}$ & $50.7_{4.1}$ & $46.3_{7.8}$ & $69.2_{7.9}$\\
    &  & 4 & This work & Gemma 1.1 2B IT & $74.9_{3.8}$ &  $80.9_{3.6}$& $36.7_{2.2}$ & $34.1_{9.3}$  & $78.7_{1.9}$ \\
    \cmidrule(r){2-10}
    & \multirow{2}{*}{Ours} & 4 & \multirow{2}{*}{This work} & \multirow{2}{*}{Gemma 1.1 2B IT} & $75.9_{3.5}$ & $75.1_{0.5}$ & $39.2_{3.7}$  & $47.1_{6.0}$ & $84.5_{1.0}$   \\
    &  & 64 &  &  & $78.7_{1.8}$ &  $90.4_{2.6}$& $53.6_{1.3}$ & $51.6_{2.3}$  & $86.4_{0.6}$ \\
    \bottomrule
    \end{tabular}
    }
    \caption{\small In-context learning results with GPT-3 babbage-002. We report mean and standard deviation over 3 random samplings (equally many from each label for classification; fully random for extraction) of synthetic/real data. \textbf{(*) Note}: For the results reported in \citet{tang2024privacypreserving}, they use GPT-3 babbage (now deprecated; we use GPT-3 babbage-002) as the downstream in-context learner, and use the top 100 logprobs for contextual calibration (only top 5 are available now). While not directly comparable, we report their results for context.}\label{tab:icl}
\vspace{-5pt}
\end{table*}

\paragraph{Results.} Results are presented in Table \ref{tab:icl}. Our gains in quantity while maintaining quality are realized in terms of 64-shot in-context learning accuracy. In some cases, we can generate more examples, but we limit ourselves to 64 for these evaluations for cost and efficiency reasons. Our results at 64 shots are comparable to real data at 64 shots. Notably, our synthetic data at 64 shots improves over real data at 4 shots -- a rough upper bound on the performance of methods limited to generating 4 examples (e.g., \citet{tang2024privacypreserving}). We also improve over results reported in \citet{tang2024privacypreserving} -- however as there are differences in the experimental setup, we also report the results of our re-implementation.\footnote{Specifically, we use their best hyperparameters (from Appendix E, Table 9 of \cite{tang2024privacypreserving}) and algorithm, but with our model, prompt, and evaluation setup.}

We evaluate with GPT-3 babbage-002 which has a 16K context window. We report results on \emph{AGNews}, \emph{DBPedia}, \emph{TREC}, \emph{MIT-G}, and \emph{MIT-D} using the implementation of \citet{zhao2021calibrate}. Following the work of \citet{tang2024privacypreserving}, we enable contextual calibration \citep{zhao2021calibrate} for classification but not extraction tasks. Our evaluation setup is a best-effort reproduction of their setup, which is no longer possible to completely reproduce due to changes to OpenAI API access (see Table \ref{tab:icl} caption). Due to cost, we follow prior work \citep{bertsch2024context,ratner2023parallel,lu2022fantastically,zhao2021calibrate} and opt to subsample test sets to 250 test examples. We run 3 seeds of sampling of exemplars from synthetic/real data. Additionally, we present a limited set of results on Gemma 2 2/9/27B IT, studying the effect of model size on classification performance in \S\ref{appendix:more-experiments:size}.

\subsection{Fine-tuning}\label{sec:experiments:fine-tuning}

We achieve significant improvements over the best available private inference method for in-context learning tasks. Since our method is capable of generating thousands of synthetic examples at reasonable privacy budgets, it is natural to ask whether it can compete with state-of-the-art private fine-tuning methods, which can generate infinitely many synthetic examples once the up-front costs of model training are paid. This makes them capable of producing enough data to train downstream classification models.

\paragraph{Experiment setup.} We use our approach to generate a large quantity of synthetic data for the purposes of fine-tuning 110M BERT-Base models. We consider 3 classification tasks used in prior work on private fine-tuning \citep{kurakin2024harnessing}), following the exact same evaluation procedure. We omit comparison to prior private prediction work (e.g. \citep{tang2024privacypreserving}), as they only generate 4 examples which is insufficient for fine-tuning.

\paragraph{Results.} Main results are presented in Table \ref{tab:fine-tuning}. Across various datasets and privacy levels, we generate between 2.5K (IMDB, $\varepsilon=1$) and 200K (Yelp, $\varepsilon=10$) examples for fine-tuning.  Prior work generating fewer than $10$ examples using private prediction were unable to compare with private fine-tuning on these tasks. While there remains a gap between the best fine-tuning and best private inference methods on downstream classification tasks, we achieve reasonable performance, even out-performing or matching the baseline of privately tuning all the parameters in the model reported in \citet{kurakin2024harnessing}. 

\begin{table*}[t]
    \centering
    \scalebox{0.66}{
    \centering
    \tabcolsep=0.1cm
    \begin{tabular}{lllrrrrrrrrrrrr}
    \toprule
    & & &\multicolumn{12}{c}{BERT Acc. (\%)} \\
    \cmidrule(r){4-15}
    & & &\multicolumn{4}{c}{IMDB @ $\varepsilon$} & \multicolumn{4}{c}{Yelp @ $\varepsilon$} & \multicolumn{4}{c}{AGNews @ $\varepsilon$} \\
    \cmidrule(r){4-7}  \cmidrule(r){8-11}  \cmidrule(r){12-15}
    Method & Reported in & Model &\multicolumn{1}{c}{$\infty$} &\multicolumn{1}{c}{$1$} & \multicolumn{1}{c}{$3$} &\multicolumn{1}{c}{$10$} &\multicolumn{1}{c}{$\infty$} &\multicolumn{1}{c}{$1$} & \multicolumn{1}{c}{$3$} &\multicolumn{1}{c}{$10$} &\multicolumn{1}{c}{$\infty$} &\multicolumn{1}{c}{$1$} & \multicolumn{1}{c}{$3$} &\multicolumn{1}{c}{$10$} \\
    \midrule
    Real data & \citep{kurakin2024harnessing} & - & $93.7_{ 0.1}$ &  \multicolumn{1}{c}{-} & \multicolumn{1}{c}{-} & \multicolumn{1}{c}{-} & $97.6_{ 0.1}$ & \multicolumn{1}{c}{-} & \multicolumn{1}{c}{-} & \multicolumn{1}{c}{-} & $93.7_{ 0.1}$ & \multicolumn{1}{c}{-} & \multicolumn{1}{c}{-} & \multicolumn{1}{c}{-}\\
    \midrule
    Fine-tune & \multirow{3}{*}{\citep{kurakin2024harnessing}} &  \multirow{3}{*}{LaMDA 8B} &  $93.2_{0.2}$ & $79.1_{1.7}$ & $83.9_{0.6}$ & $84.0_{0.7}$ & $95.9_{0.1}$ & $84.1_{0.3}$ & $84.6_{0.1}$  & $84.2_{0.3}$ &  $91.1_{0.1}$ & $65.7_{2.9}$ & $65.3_{2.7}$ & $65.1_{5.3}$\\
    Prompt-tune &  &  & $92.0_{0.1}$ & $88.1_{0.4}$ & $87.4_{0.2}$ & $90.7_{0.2}$ & $93.9_{0.1}$  & $94.1_{0.1}$ & $93.5_{0.1}$  & $94.1_{0.1}$ & $88.3_{0.3}$ & $83.9_{0.8}$ & $86.2_{0.2}$ & $86.9_{0.1}$ \\
    LoRA &  & & $91.6_{0.2}$ & $90.0_{0.3}$ & $90.6_{0.2}$ & $91.3_{0.2}$ & $96.4_{0.1}$ & $95.5_{0.1}$ & $95.6_{0.1}$ & $95.9_{0.1}$ & $91.8_{0.2}$ & $89.4_{0.1}$ & $89.6_{0.1}$ & $90.0_{0.1}$\\
    \midrule
    Ours & This work & Gemma 1.1 2B IT & $83.6_{2.9}$ & $82.7_{2.1}$ & $83.6_{1.9}$  & $85.5_{2.3}$& $91.8_{0.6}$ & $91.1_{0.2}$ & $91.6_{0.8}$ & $92.6_{0.2}$ & $81.2_{1.2}$ & $79.8_{1.8}$ & $79.3_{2.1}$ & $79.8_{0.3}$  \\
    + SVT & This work & Gemma 1.1 2B IT & \multicolumn{1}{c}{-}  & $84.3_{1.1}$ & $84.4_{1.5}$  & $85.0_{1.0}$ & \multicolumn{1}{c}{-}  & $88.4_{0.6}$ & $89.1_{0.3}$ & $89.0_{1.9}$& \multicolumn{1}{c}{-} & $79.2_{0.3}$ & $79.8_{0.4}$ & $80.4_{0.6}$ \\
    \bottomrule
    \end{tabular}
     }
    \caption{\small Results of fine-tuning on real and synthetic data with BERT. We report mean and standard deviation over 3 runs of downstream fine-tuning and evaluation. We compare to results reported in \citep{kurakin2024harnessing} that fine-tunes a synthetic data generator with DP-SGD. We generate 2.5-200K examples with private prediction, which suffices to train reasonably performing models on.}\label{tab:fine-tuning}
\end{table*}

\paragraph{Limited data regime.} We additionally consider the limited data regime. In \S\ref{appendix:more-experiments:limited} we present experiments on \emph{AGNews1K}, a 1024-subsample of \emph{AGNews}. Our method, which employs parallel composition, is ``pay-as-you-go'', i.e., we can put in a small amount of data to get out a small amount, while preserving quality. On the other hand, fine-tuning based approaches necessarily pay upfront to ensure the model and all future generations are private. This means that without sufficient data, all outputs will be low quality. Results in Table \ref{tab:agnews1k} demonstrate that our private prediction method generates more useful examples for in-context learning in this limited data regime.

\subsection{Structured data}\label{sec:experiments:structured}

We conclude our experiments with a demonstration of the lift in performance provided by using the sparse vector technique (SVT) against a public prompt. Informally, the privacy loss of our method only scales with the information density of a new example \emph{vis-a-vis} the public prompt. This contrasts with other private inference methods that incur privacy loss on every token. This is especially useful for structured data, where we avoid incurring privacy loss on syntactic elements of the data. 

\paragraph{Experiment setup.} For JSON generation, we evaluate on a dataset of information about American movies scraped from Wikipedia \citep{rust2024wikipediamoviedata}. Entries contain fields such as \texttt{title}, \texttt{year}, \texttt{cast}, and \texttt{extract} (a short synopsis). We lightly curate the data: we omit uninteresting fields (i.e., thumbnail dimensions and URLs) and remove entries with incomplete entries. We refer to the resulting 34,266 JSON examples with 6 fields as \emph{WikiMoviesJSON}. We evaluate two criteria: the rate at which output generated constitutes well-formed JSON (\emph{Parses (\%)}), and rate at which the output passes basic schema validation (\emph{Validates (\%)}). This includes checks such as: no extra fields, all required fields are present, values are the correct type, and other custom constraints (e.g. no whitespace in the \texttt{href} field).
\paragraph{Results.} Results are in Table \ref{tab:json}. Targeting a large number of examples at small $\varepsilon$ necessitates increases in the sampling temperature $\tau$, to ensure privacy, but compromises the well-formed-ness of outputs. For structured generation, there is a large amount of tokens that (a) are crucial to get right for structure preservation, and (b) easily predictable without access to sensitive data. Here the SVT enables us to get these tokens reliably and for free, leading to better generation quantity.

\begin{table}[!t]
    \centering
    \scalebox{0.7}{
    \centering
    \begin{tabular}{cllrrr}
    \toprule
    \multicolumn{1}{c}{$\varepsilon$} & Method & $\tau$ & \multicolumn{1}{c}{Parses (\%)} & \multicolumn{1}{c}{Validates (\%)} & $m$ \\
    \midrule
    \multirow{6}{*}{1} & \multirow{2}{*}{Ours} & 2 & $80.6_{1.3}$ & $74.2_{1.9}$ & $94.3_{1.2}$  \\
    &  & 2.5  & $4.9_{1.1}$ & $1.5_{0.1}$ & $138.0_{7.5}$ \\
    \cmidrule(r){2-6}
    & \multirow{2}{*}{+ SVT, $\theta$ = 0.9} & 2 & $91.7_{2.1}$ & $88.6_{3.2}$ & $289.7_{19.4}$ \\
    &  & 2.5 & $74.1_{2.7}$ & $64.0_{4.1}$ & $356.7_{25.9}$ \\
    \cmidrule(r){2-6}
    &  \multirow{2}{*}{+ SVT, $\theta$ = 1.5} & 2 & $95.5_{1.0}$ & $93.1_{0.7}$ & $893.0_{20.2}$  \\
    &  & 2.5 & $79.3_{1.0}$ & $72.7_{1.4}$ & $1178.3_{10.1}$ \\
    \bottomrule
    \end{tabular}
    }
    \caption{\small Results for generating JSON records from  \emph{WikiMoviesJSON}. We report mean and standard deviation over 3 runs of dataset generation. $\tau$ refers to the sampling temperature, and $m$ refers to the number of raw samples produced (before parsing and validation checks). The batch size used is 255. We present results at two different SVT thresholds $\theta$, and see gains in structure preservation and quantity.}
    \label{tab:json}
\end{table}

\section{Discussion}
\label{sec:discussion}

We believe that our significantly improved performance relative to \citet{tang2024privacypreserving} is primarily attributable to two algorithmic innovations.

First, for each generated token, \citet{tang2024privacypreserving}  preserve the privacy of the entire distributions from which the token is sampled (by taking argmax), even though only the token itself is included in the synthetic data. By contrast, our method uses a discrete choosing mechanism, the exponential mechanism. As a result, we do not need to maintain a DP version of the entire token distribution to release a single token. This decision leads to significantly lower noise requirements, as a straightforward calculation reveals.
Empirically, we obtained good synthetic data quality with $s = 250$, $\tau = 2$, $c = 10$ and $\delta = 10^{-6}$. In order to switch to the Gaussian mechanism using its standard $(\varepsilon, \delta)$-DP guarantee, and achieve comparable privacy guarantees we would would require $\sigma \approx 0.53$ to achieve a comparable privacy guarantee. (See \S\ref{appendix:noise}). Better analyses of the Gaussian mechanism exist, but do not offer much help. Using the improved analysis in \citet{balle2018improving} to attain the same $\varepsilon$ would require $\sigma \approx 0.34$. Conducting the analysis so that both mechanisms have equivalent privacy loss under zCDP yields $\sigma = 0.2$. These are all very large noise magnitudes relative to probabilities in $[0,1]$.\footnote{To put independent noise of magnitude $\sigma=0.2$ into perspective: suppose the ground truth next-token prediction is deterministic, i.e., $\bbap = [1, 0,...,0] \in \mathbb R^v$, $v$ = 256128 in the case of Gemma. Now with probability $\geq 0.15$, the noised distribution ${\widetilde \bp}$ has ${\widetilde \bp}_1 < 0.8$. Each other $\bp_i$ is $\geq 0.8$ w.p. $\geq 3\cdot 10^{-5}$ independently. Hence the probability of one of these being promoted to argmax is $\geq 0.15 \cdot (1 - (1-3 \cdot 10^{-5})^{v-1}) \approx 0.15$. At this rate, the chance of generating a 30 token span without a corruption is  $<1\%$.}

Secondly, \citet{tang2024privacypreserving} generated each token using a different random sample of the sensitive prompts, which is computationally very expensive, as it prevents the use of KV cache-accelerated decoding, since the cache is invalidated upon every resample. While resampling less often would be more practical, \citet{tang2024privacypreserving} noted that in this case the privacy amplification benefits of subsampling would not be adequately realized, and characterized this limitation as the ``main weakness'' of their approach. Instead, our method generates each synthetic example using a fixed disjoint subset of the sensitive prompts, allowing us to leverage parallel composition in our analysis, and thus avoid this privacy versus computation tradeoff.

\section{Conclusion}

As proprietary models become increasingly powerful, we anticipate more practitioners will be able to generate  inferences from state-of-the-art models, while fewer practitioners will be able to \emph{train} networks that perform like state-of-the-art models. This makes it increasingly important to develop private prediction methods that compete with private fine-tuning. 

We demonstrate that private prediction can be used to generate large amounts of synthetic text with reasonable differential privacy guarantees. We produce 2-3 orders of magnitude more private synthetic data than what was demonstrated in prior work in this paradigm. Access to more synthetic data lets us fine-tune downstream models, as well as yields performance improvements via many-shot in-context learning. Furthermore, we introduce a novel use of public models in which we are able to sample predictable tokens at no privacy cost, which is particularly effective for structured data.

\section*{Limitations}

While our work demonstrates that private prediction is a practical technique for privately generating a large volume of high-quality synthetic data, there remains a gap between our results and the results obtained from privately fine-tuning the parameters of the LLM. Currently, private prediction methods pay a privacy cost for every generated token, while private fine-tuning methods do not. Finally, any method for ensuring data privacy will inevitably entail some loss of data utility.

\section*{Author contributions}\label{sec:contributions}

\begin{itemize}
    \item \textbf{Alex B} is the main contributor. He implemented the method, tested variants to optimize utility and privacy, and ran most of the experiments. He also proposed the use of sparse vector.\vspace{-4pt}

    \item \textbf{Umar} proposed the method, the use of sampling to preserve privacy, and conducted the theoretical analysis.\vspace{-4pt}

    \item \textbf{Umar} and \textbf{Kareem} framed the structure of the paper and led writing.    \vspace{-4pt}

    \item \textbf{Kareem} proposed parallel composition. He also assisted with the privacy analysis.\vspace{-4pt}

    \item \textbf{Natalia} proposed logits recentering.\vspace{-4pt}

    \item \textbf{Weiwei} and \textbf{Alexey} provided infrastructure support and code for running experiments. \textbf{Alexey} suggested the limited data experiments and ran the fine-tuning baselines.\vspace{-4pt}

    \item \textbf{Natalia}, \textbf{Andreas}, and \textbf{Sergei} advised the project.\vspace{-4pt}

    \item \textbf{Everyone} contributed to discussing, interpreting, and iterating on experiment results as well as project management.\vspace{-4pt}
\end{itemize}

\newpage
\bibliography{references}

\begin{thebibliography}{39}
\providecommand{\natexlab}[1]{#1}

\bibitem[{Abadi et~al.(2016)Abadi, Chu, Goodfellow, McMahan, Mironov, Talwar, and Zhang}]{abadi2016deep}
Martin Abadi, Andy Chu, Ian Goodfellow, H~Brendan McMahan, Ilya Mironov, Kunal Talwar, and Li~Zhang. 2016.
\newblock Deep learning with differential privacy.
\newblock In \emph{Proceedings of the 2016 ACM SIGSAC conference on computer and communications security}, pages 308--318.

\bibitem[{Balle and Wang(2018)}]{balle2018improving}
Borja Balle and Yu-Xiang Wang. 2018.
\newblock Improving the gaussian mechanism for differential privacy: Analytical calibration and optimal denoising.
\newblock In \emph{International Conference on Machine Learning}, pages 394--403. PMLR.

\bibitem[{Bertsch et~al.(2024)Bertsch, Ivgi, Alon, Berant, Gormley, and Neubig}]{bertsch2024context}
Amanda Bertsch, Maor Ivgi, Uri Alon, Jonathan Berant, Matthew~R Gormley, and Graham Neubig. 2024.
\newblock In-context learning with long-context models: An in-depth exploration.
\newblock \emph{arXiv preprint arXiv:2405.00200}.

\bibitem[{Bun and Steinke(2016)}]{bun2016concentrated}
Mark Bun and Thomas Steinke. 2016.
\newblock Concentrated differential privacy: Simplifications, extensions, and lower bounds.
\newblock In \emph{Theory of Cryptography Conference}, pages 635--658. Springer.

\bibitem[{Canonne et~al.(2020)Canonne, Kamath, and Steinke}]{canonne2020discrete}
Cl{\'e}ment~L Canonne, Gautam Kamath, and Thomas Steinke. 2020.
\newblock The discrete gaussian for differential privacy.
\newblock \emph{Advances in Neural Information Processing Systems}, 33:15676--15688.

\bibitem[{Cesar and Rogers(2021)}]{cesar2021bounding}
Mark Cesar and Ryan Rogers. 2021.
\newblock \href {https://proceedings.mlr.press/v132/cesar21a.html} {Bounding, concentrating, and truncating: Unifying privacy loss composition for data analytics}.
\newblock In \emph{Proceedings of the 32nd International Conference on Algorithmic Learning Theory}, volume 132 of \emph{Proceedings of Machine Learning Research}, pages 421--457. PMLR.

\bibitem[{Duan et~al.(2023)Duan, Dziedzic, Papernot, and Boenisch}]{NEURIPS2023_f26119b4}
Haonan Duan, Adam Dziedzic, Nicolas Papernot, and Franziska Boenisch. 2023.
\newblock \href {https://proceedings.neurips.cc/paper_files/paper/2023/file/f26119b4ffe38c24d97e4c49d334b99e-Paper-Conference.pdf} {Flocks of stochastic parrots: Differentially private prompt learning for large language models}.
\newblock In \emph{Advances in Neural Information Processing Systems}, volume~36, pages 76852--76871. Curran Associates, Inc.

\bibitem[{Durfee and Rogers(2019)}]{durfee2019practical}
David Durfee and Ryan~M Rogers. 2019.
\newblock Practical differentially private top-k selection with pay-what-you-get composition.
\newblock \emph{Advances in Neural Information Processing Systems}, 32.

\bibitem[{Dwork and Feldman(2018)}]{pmlr-v75-dwork18a}
Cynthia Dwork and Vitaly Feldman. 2018.
\newblock \href {https://proceedings.mlr.press/v75/dwork18a.html} {Privacy-preserving prediction}.
\newblock In \emph{Proceedings of the 31st Conference On Learning Theory}, volume~75 of \emph{Proceedings of Machine Learning Research}, pages 1693--1702. PMLR.

\bibitem[{Dwork et~al.(2006)Dwork, Kenthapadi, McSherry, Mironov, and Naor}]{dwork2006our}
Cynthia Dwork, Krishnaram Kenthapadi, Frank McSherry, Ilya Mironov, and Moni Naor. 2006.
\newblock Our data, ourselves: Privacy via distributed noise generation.
\newblock In \emph{Advances in Cryptology-EUROCRYPT 2006: 24th Annual International Conference on the Theory and Applications of Cryptographic Techniques, St. Petersburg, Russia, May 28-June 1, 2006. Proceedings 25}, pages 486--503. Springer.

\bibitem[{Dwork et~al.(2009)Dwork, Naor, Reingold, Rothblum, and Vadhan}]{dwork2009complexity}
Cynthia Dwork, Moni Naor, Omer Reingold, Guy~N Rothblum, and Salil Vadhan. 2009.
\newblock On the complexity of differentially private data release: efficient algorithms and hardness results.
\newblock In \emph{Proceedings of the forty-first annual ACM symposium on Theory of computing}, pages 381--390.

\bibitem[{Flemings et~al.(2024)Flemings, Razaviyayn, and Annavaram}]{flemings2024differentially}
James Flemings, Meisam Razaviyayn, and Murali Annavaram. 2024.
\newblock \href {https://arxiv.org/abs/2403.15638} {Differentially private next-token prediction of large language models}.
\newblock \emph{Preprint}, arXiv:2403.15638.

\bibitem[{{Gemma Team}(2024)}]{gemmateam2024gemma}
{Gemma Team}. 2024.
\newblock \href {https://arxiv.org/abs/2403.08295} {Gemma: Open models based on gemini research and technology}.
\newblock \emph{Preprint}, arXiv:2403.08295.

\bibitem[{Ginart et~al.(2022)Ginart, van~der Maaten, Zou, and Guo}]{ginart2022submix}
Antonio Ginart, Laurens van~der Maaten, James Zou, and Chuan Guo. 2022.
\newblock \href {https://arxiv.org/abs/2201.00971} {Submix: Practical private prediction for large-scale language models}.
\newblock \emph{CoRR}, abs/2201.00971.

\bibitem[{Hong et~al.(2024)Hong, Wang, Zhang, LI, Li, and Wang}]{hong2024dpopt}
Junyuan Hong, Jiachen~T. Wang, Chenhui Zhang, Zhangheng LI, Bo~Li, and Zhangyang Wang. 2024.
\newblock \href {https://openreview.net/forum?id=Ifz3IgsEPX} {{DP}-{OPT}: Make large language model your privacy-preserving prompt engineer}.
\newblock In \emph{The Twelfth International Conference on Learning Representations}.

\bibitem[{Hu et~al.(2022)Hu, yelong shen, Wallis, Allen-Zhu, Li, Wang, Wang, and Chen}]{hu2022lora}
Edward~J Hu, yelong shen, Phillip Wallis, Zeyuan Allen-Zhu, Yuanzhi Li, Shean Wang, Lu~Wang, and Weizhu Chen. 2022.
\newblock \href {https://openreview.net/forum?id=nZeVKeeFYf9} {Lo{RA}: Low-rank adaptation of large language models}.
\newblock In \emph{International Conference on Learning Representations}.

\bibitem[{Kurakin et~al.(2024)Kurakin, Ponomareva, Syed, MacDermed, and Terzis}]{kurakin2024harnessing}
Alexey Kurakin, Natalia Ponomareva, Umar Syed, Liam MacDermed, and Andreas Terzis. 2024.
\newblock \href {https://arxiv.org/abs/2306.01684} {Harnessing large-language models to generate private synthetic text}.
\newblock \emph{Preprint}, arXiv:2306.01684.

\bibitem[{Liu et~al.(2012)Liu, Cyphers, Pasupat, McGraw, and Glass}]{liu2012conversational}
Jingjing Liu, Scott Cyphers, Panupong Pasupat, Ian McGraw, and James~R. Glass. 2012.
\newblock \href {https://doi.org/10.21437/INTERSPEECH.2012-563} {A conversational movie search system based on conditional random fields}.
\newblock In \emph{{INTERSPEECH} 2012, 13th Annual Conference of the International Speech Communication Association, Portland, Oregon, USA, September 9-13, 2012}, pages 2454--2457. {ISCA}.

\bibitem[{Lu et~al.(2022)Lu, Bartolo, Moore, Riedel, and Stenetorp}]{lu2022fantastically}
Yao Lu, Max Bartolo, Alastair Moore, Sebastian Riedel, and Pontus Stenetorp. 2022.
\newblock \href {https://doi.org/10.18653/v1/2022.acl-long.556} {Fantastically ordered prompts and where to find them: Overcoming few-shot prompt order sensitivity}.
\newblock In \emph{Proceedings of the 60th Annual Meeting of the Association for Computational Linguistics (Volume 1: Long Papers)}, pages 8086--8098, Dublin, Ireland. Association for Computational Linguistics.

\bibitem[{Maas et~al.(2011)Maas, Daly, Pham, Huang, Ng, and Potts}]{maas2011learning}
Andrew~L. Maas, Raymond~E. Daly, Peter~T. Pham, Dan Huang, Andrew~Y. Ng, and Christopher Potts. 2011.
\newblock \href {https://aclanthology.org/P11-1015} {Learning word vectors for sentiment analysis}.
\newblock In \emph{Proceedings of the 49th Annual Meeting of the Association for Computational Linguistics: Human Language Technologies}, pages 142--150, Portland, Oregon, USA. Association for Computational Linguistics.

\bibitem[{Majmudar et~al.(2022)Majmudar, Dupuy, Peris, Smaili, Gupta, and Zemel}]{Majmudar2022}
Jimit Majmudar, Christophe Dupuy, Charith Peris, Sami Smaili, Rahul Gupta, and Richard Zemel. 2022.
\newblock \href {https://www.amazon.science/publications/differentially-private-decoding-in-large-language-models} {Differentially private decoding in large language models}.
\newblock In \emph{NAACL 2022 Second Workshop on Trustworthy Natural Language Processing (TrustNLP)}.

\bibitem[{Nissim et~al.(2007)Nissim, Raskhodnikova, and Smith}]{nrs07}
Kobbi Nissim, Sofya Raskhodnikova, and Adam~D. Smith. 2007.
\newblock Smooth sensitivity and sampling in private data analysis.
\newblock In \emph{Proceedings of the 39th Annual {ACM} Symposium on Theory of Computing, San Diego, California, USA, June 11-13, 2007}, pages 75--84. {ACM}.

\bibitem[{Papernot et~al.(2017)Papernot, Abadi, Erlingsson, Goodfellow, and Talwar}]{papernot2017semi}
Nicolas Papernot, Mart{\'{\i}}n Abadi, {\'{U}}lfar Erlingsson, Ian~J. Goodfellow, and Kunal Talwar. 2017.
\newblock \href {https://openreview.net/forum?id=HkwoSDPgg} {Semi-supervised knowledge transfer for deep learning from private training data}.
\newblock In \emph{5th International Conference on Learning Representations, {ICLR} 2017, Toulon, France, April 24-26, 2017, Conference Track Proceedings}.

\bibitem[{Papernot et~al.(2018)Papernot, Song, Mironov, Raghunathan, Talwar, and Erlingsson}]{papernot2018scalable}
Nicolas Papernot, Shuang Song, Ilya Mironov, Ananth Raghunathan, Kunal Talwar, and {\'{U}}lfar Erlingsson. 2018.
\newblock \href {https://openreview.net/forum?id=rkZB1XbRZ} {Scalable private learning with {PATE}}.
\newblock In \emph{6th International Conference on Learning Representations, {ICLR} 2018, Vancouver, BC, Canada, April 30 - May 3, 2018, Conference Track Proceedings}.

\bibitem[{Ponomareva et~al.(2023)Ponomareva, Hazimeh, Kurakin, Xu, Denison, McMahan, Vassilvitskii, Chien, and Thakurta}]{ponomareva2023how}
Natalia Ponomareva, Hussein Hazimeh, Alex Kurakin, Zheng Xu, Carson Denison, H.~Brendan McMahan, Sergei Vassilvitskii, Steve Chien, and Abhradeep~Guha Thakurta. 2023.
\newblock \href {https://doi.org/10.1613/jair.1.14649} {How to dp-fy {ML:} {A} practical guide to machine learning with differential privacy}.
\newblock \emph{J. Artif. Intell. Res.}, 77:1113--1201.

\bibitem[{Ratner et~al.(2023)Ratner, Levine, Belinkov, Ram, Magar, Abend, Karpas, Shashua, Leyton-Brown, and Shoham}]{ratner2023parallel}
Nir Ratner, Yoav Levine, Yonatan Belinkov, Ori Ram, Inbal Magar, Omri Abend, Ehud Karpas, Amnon Shashua, Kevin Leyton-Brown, and Yoav Shoham. 2023.
\newblock \href {https://doi.org/10.18653/v1/2023.acl-long.352} {Parallel context windows for large language models}.
\newblock In \emph{Proceedings of the 61st Annual Meeting of the Association for Computational Linguistics (Volume 1: Long Papers)}, pages 6383--6402, Toronto, Canada. Association for Computational Linguistics.

\bibitem[{Rogers and Steinke(2021)}]{DPorg-exponential-mechanism-bounded-range}
Ryan Rogers and Thomas Steinke. 2021.
\newblock A better privacy analysis of the exponential mechanism.
\newblock DifferentialPrivacy.org.
\newblock \url{https://differentialprivacy.org/exponential-mechanism-bounded-range/}.

\bibitem[{Rust(2024)}]{rust2024wikipediamoviedata}
Peter Rust. 2024.
\newblock wikipedia-movie-data.
\newblock \url{https://github.com/prust/wikipedia-movie-data}.

\bibitem[{Tang et~al.(2024)Tang, Shin, Inan, Manoel, Mireshghallah, Lin, Gopi, Kulkarni, and Sim}]{tang2024privacypreserving}
Xinyu Tang, Richard Shin, Huseyin~A Inan, Andre Manoel, Fatemehsadat Mireshghallah, Zinan Lin, Sivakanth Gopi, Janardhan Kulkarni, and Robert Sim. 2024.
\newblock \href {https://openreview.net/forum?id=oZtt0pRnOl} {Privacy-preserving in-context learning with differentially private few-shot generation}.
\newblock In \emph{The Twelfth International Conference on Learning Representations}.

\bibitem[{Tran and Xiong(2024)}]{tran2024differentially}
Toan~V. Tran and Li~Xiong. 2024.
\newblock \href {https://arxiv.org/abs/2406.01457} {Differentially private tabular data synthesis using large language models}.
\newblock \emph{Preprint}, arXiv:2406.01457.

\bibitem[{Turc et~al.(2019)Turc, Chang, Lee, and Toutanova}]{turc2019}
Iulia Turc, Ming-Wei Chang, Kenton Lee, and Kristina Toutanova. 2019.
\newblock Well-read students learn better: On the importance of pre-training compact models.
\newblock \emph{arXiv preprint arXiv:1908.08962v2}.

\bibitem[{Voorhees and Tice(2000)}]{voorhees2000building}
Ellen~M. Voorhees and Dawn~M. Tice. 2000.
\newblock \href {https://doi.org/10.1145/345508.345577} {Building a question answering test collection}.
\newblock In \emph{Proceedings of the 23rd Annual International ACM SIGIR Conference on Research and Development in Information Retrieval}, SIGIR '00, page 200–207, New York, NY, USA. Association for Computing Machinery.

\bibitem[{Wu et~al.(2024{\natexlab{a}})Wu, Xu, Zhang, Zhang, and Ramage}]{wu2024prompt}
Shanshan Wu, Zheng Xu, Yanxiang Zhang, Yuanbo Zhang, and Daniel Ramage. 2024{\natexlab{a}}.
\newblock \href {https://arxiv.org/abs/2404.04360} {Prompt public large language models to synthesize data for private on-device applications}.
\newblock \emph{Preprint}, arXiv:2404.04360.

\bibitem[{Wu et~al.(2024{\natexlab{b}})Wu, Panda, Wang, and Mittal}]{wu2024privacypreserving}
Tong Wu, Ashwinee Panda, Jiachen~T. Wang, and Prateek Mittal. 2024{\natexlab{b}}.
\newblock \href {https://openreview.net/forum?id=x4OPJ7lHVU} {Privacy-preserving in-context learning for large language models}.
\newblock In \emph{The Twelfth International Conference on Learning Representations}.

\bibitem[{Xie et~al.(2024)Xie, Lin, Backurs, Gopi, Yu, Inan, Nori, Jiang, Zhang, Lee, Li, and Yekhanin}]{xie2024differentially}
Chulin Xie, Zinan Lin, Arturs Backurs, Sivakanth Gopi, Da~Yu, Huseyin~A Inan, Harsha Nori, Haotian Jiang, Huishuai Zhang, Yin~Tat Lee, Bo~Li, and Sergey Yekhanin. 2024.
\newblock \href {https://openreview.net/forum?id=jnF53uXmBS} {Differentially private synthetic data via foundation model {API}s 2: Text}.
\newblock In \emph{ICLR 2024 Workshop on Secure and Trustworthy Large Language Models}.

\bibitem[{Yu et~al.(2024)Yu, Kairouz, Oh, and Xu}]{yu2024privacypreserving}
Da~Yu, Peter Kairouz, Sewoong Oh, and Zheng Xu. 2024.
\newblock \href {https://arxiv.org/abs/2402.13659} {Privacy-preserving instructions for aligning large language models}.
\newblock \emph{Preprint}, arXiv:2402.13659.

\bibitem[{Yue et~al.(2023)Yue, Inan, Li, Kumar, McAnallen, Shajari, Sun, Levitan, and Sim}]{yue-etal-2023-synthetic}
Xiang Yue, Huseyin Inan, Xuechen Li, Girish Kumar, Julia McAnallen, Hoda Shajari, Huan Sun, David Levitan, and Robert Sim. 2023.
\newblock \href {https://doi.org/10.18653/v1/2023.acl-long.74} {Synthetic text generation with differential privacy: A simple and practical recipe}.
\newblock In \emph{Proceedings of the 61st Annual Meeting of the Association for Computational Linguistics (Volume 1: Long Papers)}, pages 1321--1342, Toronto, Canada. Association for Computational Linguistics.

\bibitem[{Zhang et~al.(2015)Zhang, Zhao, and LeCun}]{zhang2015character}
Xiang Zhang, Junbo Zhao, and Yann LeCun. 2015.
\newblock \href {https://proceedings.neurips.cc/paper_files/paper/2015/file/250cf8b51c773f3f8dc8b4be867a9a02-Paper.pdf} {Character-level convolutional networks for text classification}.
\newblock In \emph{Advances in Neural Information Processing Systems}, volume~28. Curran Associates, Inc.

\bibitem[{Zhao et~al.(2021)Zhao, Wallace, Feng, Klein, and Singh}]{zhao2021calibrate}
Zihao Zhao, Eric Wallace, Shi Feng, Dan Klein, and Sameer Singh. 2021.
\newblock \href {https://proceedings.mlr.press/v139/zhao21c.html} {Calibrate before use: Improving few-shot performance of language models}.
\newblock In \emph{Proceedings of the 38th International Conference on Machine Learning}, volume 139 of \emph{Proceedings of Machine Learning Research}, pages 12697--12706. PMLR.

\end{thebibliography}

\newpage
\appendix

\section{Additional experiments}\label{appendix:more-experiments}
\subsection{Private prediction beats fine-tuning in the limited data regime}\label{appendix:more-experiments:limited}

We do LoRA fine-tuning with DP-SGD on \emph{AGNews1K}, with the \emph{same setup that beats our method in the full data regime}. We sample synthetic data from the fine-tuned model. We also run our private prediction method on \emph{AGNews1K}. We evaluate performance on 4 and 16 shot in-context learning with GPT-3 babbage-002 (the same experimental setting as \S\ref{sec:experiments:icl}). Our private prediction approach outperforms the fine-tuning setup that does better in the full data regime.

\begin{table}[!ht]
    \small
    \centering
    \begin{tabular}{clllr}
    \toprule
    \multicolumn{1}{c}{$\varepsilon$} & Method & Shots & Model & Acc. (\%)  \\
    \midrule
    \multirow{4}{*}{1} & \multirow{2}{*}{LoRA} & 4 & \multirow{2}{*}{LaMDA 8B} & $63.3_{8.0}$\\
     & & 16 &  & $68.1_{5.9}$\\
    \cmidrule(r){2-5}
    & \multirow{2}{*}{Ours} & 4 & \multirow{2}{*}{Gemma 1.1 2B IT} & $73.9_{8.3}$\\
     & & 16 &  & $80.1_{2.5}$\\
    \bottomrule
    \end{tabular}
    \caption{\small Results on \emph{AGNews1K}, a 1024-subsample of AGNews. Our method is ``pay-as-you-go'', and is capable of generating a few high quality examples for in-context learning in this regime. On the other hand fine-tuning does worse due to the stricter requirement that all future model outputs must be private. Evaluation setup is the same in \S\ref{sec:experiments:icl}, except here we run 16 instead of 64-shot, because 16 examples produced by the LoRA model fills up the entire context length of babbage-002.}
    \label{tab:agnews1k}
\end{table}

\subsection{Effect of model size}\label{appendix:more-experiments:size}
We report results on the effect of the data generator's size on in-context classification performance on DBPedia. Our setup is the same as the experiments in \S\ref{sec:experiments:icl}, with the change that we use the Gemma 2 2/9/27B IT models to get more size variation in the same model family. This necessitated a slight change in the prompt (specifically, we append to the instruction: ``{\texttt{No formatting or explanations.}}'' For this evaluation, we see limited improvement due to scale.

\begin{table}[!ht]
    \small
    \centering
    \begin{tabular}{cllr}
    \toprule
    \multicolumn{1}{c}{$\varepsilon$} & Shots & Model & Acc. (\%)  \\
    \midrule
    \multirow{7}{*}{1} & 4 & \multirow{2}{*}{Gemma 2 2B IT} & $75.7_{0.8}$ \\
     & 64 &  & $91.2_{0.0}$\\
    \cmidrule(r){2-4}
     & 4 & \multirow{2}{*}{Gemma 2 9B IT} & $76.4_{1.2}$\\
     &  64 &  & $92.4_{1.4}$\\
    \cmidrule(r){2-4}
     & 4 & \multirow{2}{*}{Gemma 2 27B IT} & $76.9_{2.2}$\\
     &  64 &  & $91.9_{1.9}$\\
    \bottomrule
    \end{tabular}
    \caption{\small Results on DBPedia classification. Evaluation setup is the same as in \S\ref{sec:experiments:icl}. We see limited improvement from the increase in model size.}
    \label{tab:size}
\end{table}

\section{Design choices}\label{appendix:design-choices}

\subsection{Logits clipping function}\label{appendix:design-choices:clipping}

In Figure \ref{fig:recenter}, we compare results for different logits clipping functions. The baseline approach it to clip all logits to the interval $[-c,c]$ before aggregation and softmax -- we refer to this as ``fixed interval clipping''. Alternatively, we can clip to the range $[\max_j \{\bz_j\}-2c, \max_j \{\bz_j\}]$ and then translate to the interval $[-c, c]$ (Eq.~\ref{eq:clip}). In Figure \ref{fig:recenter} we plot the distortion as a consequence of clipping in terms of L1 error, and find that the latter approach allows us clip more than twice as aggressively, thus improving the privacy guarantee, without compromising utility.

\section{Privacy checklist}
We follow privacy reporting guidelines \cite[Section 5.3.3]{ponomareva2023how} to state all information needed to get a complete picture of our privacy guarantee.

\begin{enumerate}
    \item \textbf{DP setting.} This a central DP guarantee where the service provider is trusted
to correctly implement the mechanism.
    \item \textbf{Instantiating the DP definition.}
    \begin{enumerate}
        \item \emph{Data accesses covered.} DP guarantees only apply to a single synthetic dataset generation run. We do not account for hyperparameter tuning.
        \item \emph{Final mechanism output.} The mechanism's output is the full sequence of sampled tokens of the output synthetic dataset (all synthetic data instances), which can viewed as responses to a sequence of adaptive queries with different input text prefixes.
        \item \emph{Unit of privacy.} Example-level DP, where an example is the full sequence of tokens for one input sensitive example.
        \item \emph{Adjacency definition for neighboring datasets.} Add/remove one example.
    \end{enumerate}
    \item \textbf{Privacy accounting details.}
    \begin{enumerate}
        \item \emph{Type of accounting used.} zCDP based accounting.
        \item \emph{Accounting assumptions.} Our analysis assumes independent assignment of each prompt to a batch (Assumption 1). We can fulfill this with hashing (see \S4), which corresponds to assigning each prompt to a batch uniformly at random. However, the implementation uses fixed size batches constructed via shuffling.
        \item \emph{The formal DP statement.} We report $(\varepsilon,\delta)$-DP for $\varepsilon=1,3,10$ and $\delta = (\texttt{training\_set\_size})^{-1}$.
    \end{enumerate}
    \item\textbf{Transparency and verifiability.} We provide a full description of the algorithm and a complete privacy analysis. Our implementation is built off of the official open-source JAX library for Gemma: \url{https://github.com/google-deepmind/gemma}. At this time, the implementation is close-sourced.
\end{enumerate}

\paragraph{Discussion.} The full output synthetic dataset of our mechanism indeed satisfies example-level add/remove DP. Below we discuss the more granular aspects of our privacy guarantee that are not captured in the above statement. 

First, as a consequence of parallel composition, we note that each individual synthetic example output only incurs privacy cost on the batch used to generate it. Hence, having the mechanism release a subset of the output will only incur privacy cost on a subset of our input dataset. Second, our current privacy guarantee holds with respect to the change of an entire context, which may consist of multiple examples. However, as we do not prescribe a way to group examples into contexts, we cannot \emph{a priori} identify which examples belong to a single context.

\begin{figure*}[!t] %
  \centering

  \begin{subfigure}[b]{0.8\linewidth}
    \includegraphics[trim={0 37.5cm 0 0},clip,width=\linewidth]{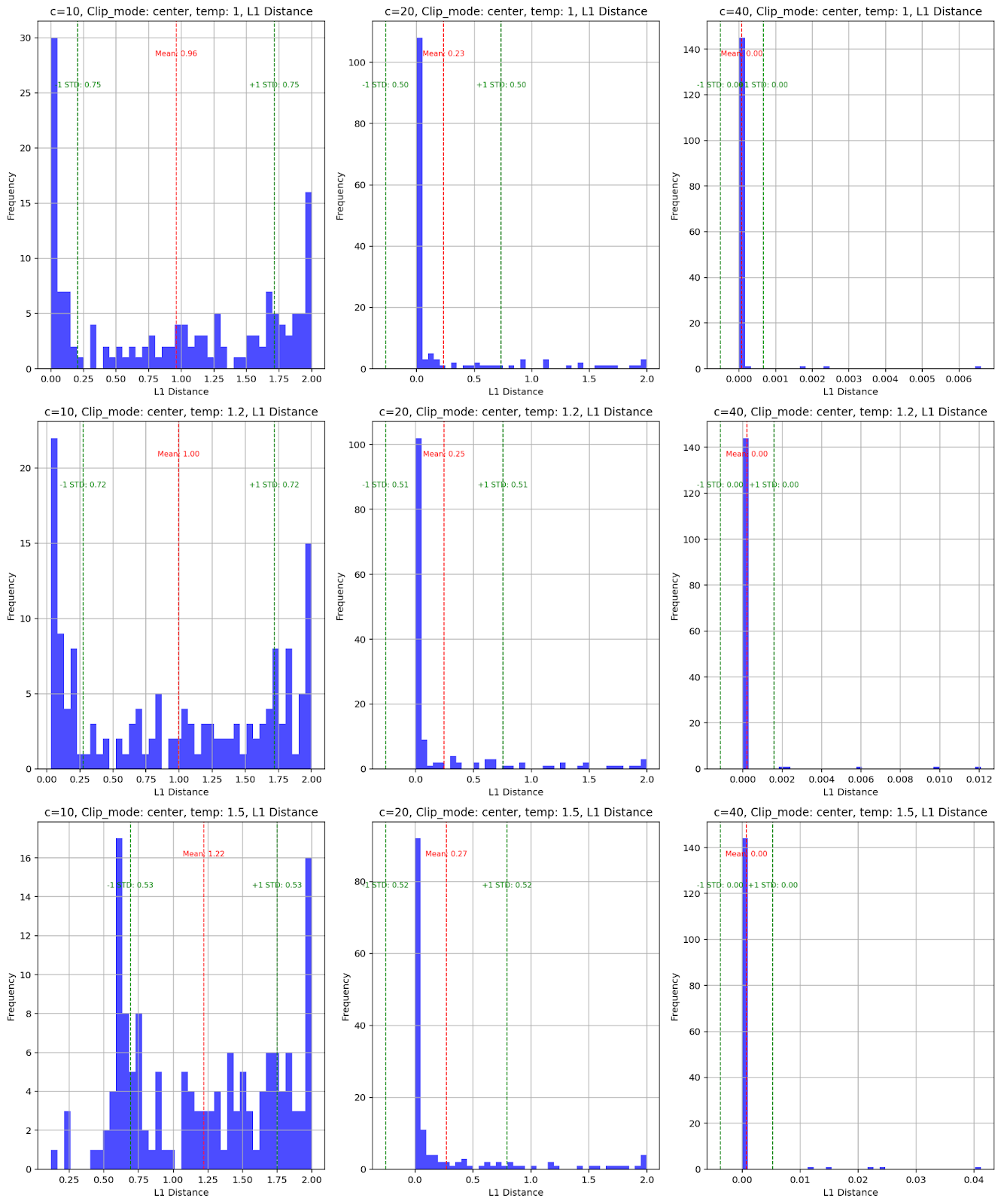}
    \caption{Distribution of L1 error induced by fixed interval clipping.}
  \end{subfigure}
  \vspace{15pt}

  \begin{subfigure}[b]{0.8\linewidth}
    \includegraphics[trim={0 37.5cm 0 0},clip,width=\linewidth]{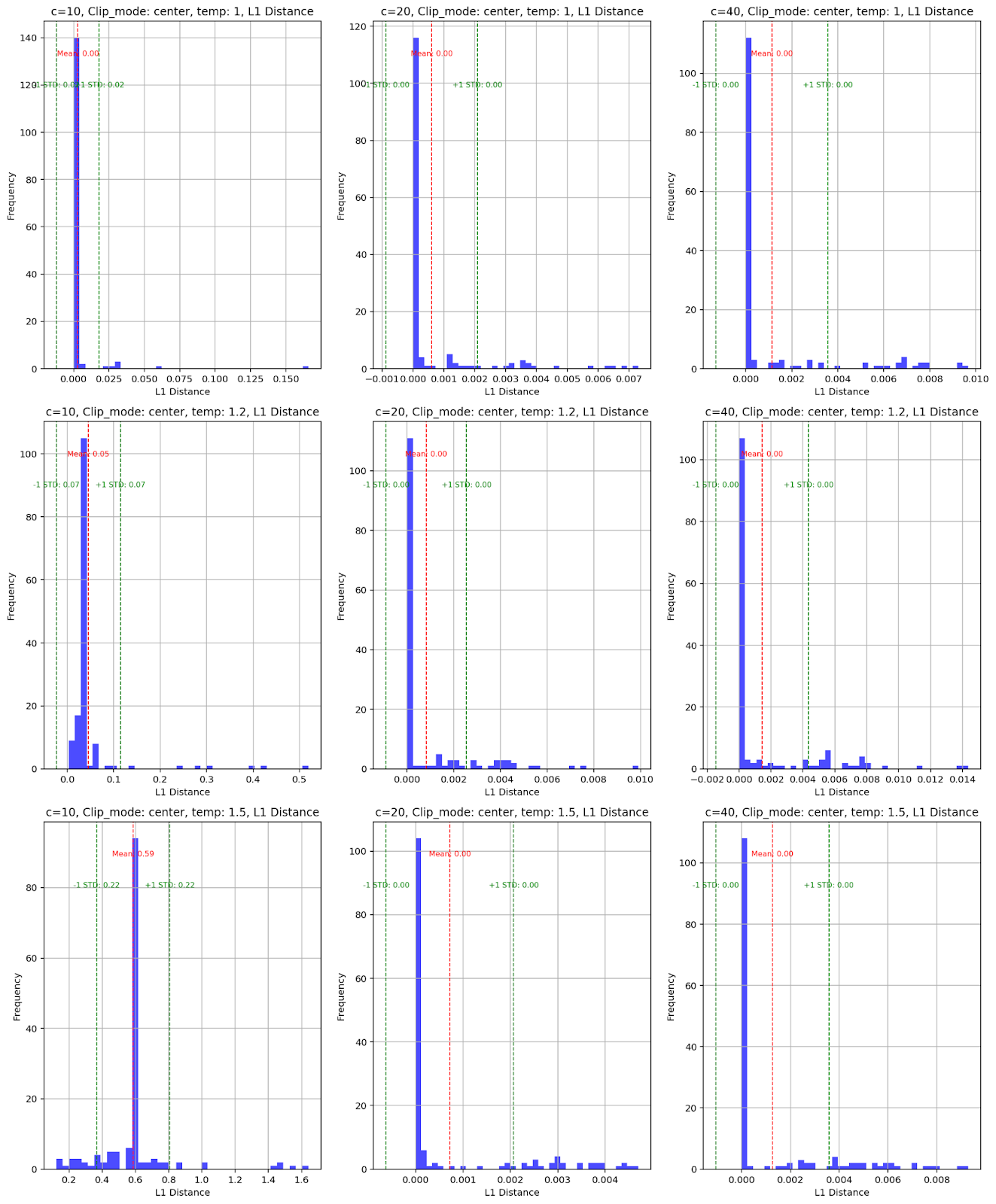}
    \caption{Distribution of L1 error induced by clipping with recentering.}
  \end{subfigure}
  
  \caption{\small We sample a few hundred tokens using logits aggregation with no clipping. At each sampling step, we compute the L1 distances between the post-softmax distributions of aggregated clipped logits vs. aggregated unclipped logits, at various settings of $c$, and plot them in a histrogram. We observe less error, at lower choices of $c$ when clipping with recentering (note the $x$-axis scales).}\label{fig:recenter}
\end{figure*}

\section{Proof of Theorem \ref{thm:main}}\label{appendix:proof}

Our proof of Theorem \ref{thm:main} is organized into sections. \S\ref{sec:definitions} provides basic definitions. \S\ref{sec:composition} and \S\ref{sec:sensitivity} establish key results related to composition and sensitivity. \S\ref{sec:simpler} proves the privacy of simpler mechanisms that each account for a portion of the functionality of Algorithm \ref{alg:main}. \S\ref{sec:together} puts all the pieces together and completes the proof.

\subsection{Definitions}
\label{sec:definitions}

In \S\ref{sec:analysis} we defined neighboring prompt datasets. We extend the definition to arbitrary sets.

\begin{defn} Let $\CU$ be a set. Let $S, S' \subseteq \CU$. We say that $S$ and $S'$ are \emph{neighbors} if there exists $u \in \CU$ such that $S = S' \cup \{u\}$ or $S' = S \cup \{u\}$.\end{defn}

The sensitivity of a function is an upper bound on how much its value can change over neighbors.

\begin{defn} Let $\CU$ be a set. Let $k \ge 1$. Let $f: 2^\CU \rightarrow \bbR^k$. The \emph{sensitivity} of $f$ is
\[
\sup_{S, S'} \norm{f(S) - f(S')}_\infty
\]
where the supremum is over neighbors $S, S' \in \CU$.\end{defn}

Zero-concentrated differential privacy (zCDP) is a relaxation of $\varepsilon$-differential privacy.

\begin{defn}[\citet{bun2016concentrated}] A mechanism $M$ satisfies \emph{$\rho$-zCDP} if
\[
D_\alpha(M(D) \parallel M(D')) \le \rho\alpha
\]
for all $\alpha > 1$ and neighboring datasets $D, D' \in \CD$, where $D_\alpha(P \parallel Q)$ is R\'enyi divergence of order $\alpha$ betweeen distributions $P$ and $Q$.

\end{defn}

\subsection{Composition}
\label{sec:composition}

Zero-concentrated differential privacy obeys a simple sequential composition rule.

\begin{lem} \label{lem:sequential} If mechanisms $M_1$ and $M_2$ satisfy $\rho_1$-zCDP and $\rho_2$-zCDP, respectively, then the sequential composition of $M_1$ and $M_2$ satisfies $(\rho_1 + \rho_2)$-zCDP.\end{lem}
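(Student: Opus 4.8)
The plan is to prove Lemma \ref{lem:sequential} directly from the definition of $\rho$-zCDP via the Rényi-divergence chain rule for composed mechanisms. Write $M = (M_1, M_2)$ for the sequential composition, where $M_2$ may depend adaptively on the output of $M_1$. Fix neighboring datasets $D, D' \in \CD$ and an order $\alpha > 1$. I would denote by $P_i$ the distribution of $M_i(D)$ (with $P_2$ conditioned on the realized output of $M_1$) and by $Q_i$ the corresponding distribution under $D'$, and similarly $P$, $Q$ for the joint outputs.

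First I would invoke the standard composition/additivity property of Rényi divergence: for joint distributions over a product space in which the second coordinate is drawn conditionally on the first, one has
\[
D_\alpha\big(P \parallel Q\big) \le D_\alpha\big(P_1 \parallel Q_1\big) + \sup_{y} D_\alpha\big(P_2(\cdot \mid y) \parallel Q_2(\cdot \mid y)\big),
\]
where the supremum is over outputs $y$ of the first mechanism. (In the non-adaptive case this is an exact identity with the supremum replaced by the single fixed value; the adaptive statement with the sup is the version actually needed for Algorithm \ref{alg:main}, since later tokens are generated conditioned on earlier ones.) This inequality is the technical heart of the argument, and the main obstacle is simply citing or restating it at the right level of generality — with adaptivity — rather than any computation.

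Next I would apply the zCDP hypotheses termwise. Since $M_1$ is $\rho_1$-zCDP, $D_\alpha(P_1 \parallel Q_1) \le \rho_1 \alpha$ for every $\alpha > 1$. Since $M_2$ (for each fixed first-stage output $y$, on the neighboring pair $D, D'$) is $\rho_2$-zCDP, $D_\alpha(P_2(\cdot \mid y) \parallel Q_2(\cdot \mid y)) \le \rho_2 \alpha$ uniformly in $y$, so the supremum is also at most $\rho_2 \alpha$. Substituting into the displayed bound gives $D_\alpha(P \parallel Q) \le \rho_1 \alpha + \rho_2 \alpha = (\rho_1 + \rho_2)\alpha$.

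Finally, since $D$, $D'$ and $\alpha > 1$ were arbitrary, the definition of zCDP is satisfied with parameter $\rho_1 + \rho_2$, which is exactly the claim. I would close by noting that an easy induction then extends the result to the composition of any finite number of zCDP mechanisms, with the $\rho$ parameters adding — this is the form in which the lemma will actually be used when accounting for the $r$ private tokens sampled in Algorithm \ref{alg:main}.
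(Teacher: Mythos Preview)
Your proposal is correct and follows the standard route via the R\'enyi-divergence chain rule. Note, however, that the paper does not actually supply a proof of this lemma: it simply states the sequential-composition rule as a known fact about zCDP (implicitly deferring to \citet{bun2016concentrated}), so there is no paper proof to compare against. Your argument is exactly the one found in that reference, so nothing is lost.
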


Parallel composition is a well-known technique in differential privacy that is useful for establishing privacy guarantees in scenarios where a mechanism is independently applied to disjoint batches of a dataset. Many versions of parallel composition require that the batches are chosen in a fully data-independent manner. We show that the same result holds under a weaker assumption.

\begin{lem} \label{lem:parallel} Let $k > 1$ be the number of batches. Let $f$ be a batch assignment function that satisfies Assumption \ref{assum:batch}. In other words, let $f$ be a function that map each prompt to and element of $[k]$. For any dataset of prompts $D$ and $i \in [k]$ let
\[
D_i = \{\bp \in D : f(\bp) = i\}.
\]
be the $i$th batch of prompts. Let $M$ be a mechanism that satisfies $\rho$-zCDP. If $\widehat{M}$ is the mechanism defined by
\[
\widehat{M}(D) = (M(D_1), \ldots, M(D_k))
\]
then $\widehat{M}$ satisfies $\rho$-zCDP.
\end{lem}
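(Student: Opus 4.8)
The plan is to prove Lemma~\ref{lem:parallel} by reducing it to the behavior of a single batch under the add/remove neighbor relation, exploiting the fact that Assumption~\ref{assum:batch} makes the partition itself deterministic given the data. First I would fix neighboring datasets $D, D' \in \CD$, and without loss of generality assume $D' = D \cup \{\bp\}$ for some prompt $\bp$. Let $j = f(\bp) \in [k]$. The key observation, which follows immediately from Assumption~\ref{assum:batch}, is that for every $i \neq j$ we have $D_i = D_i'$ (the batch-$i$ subsets agree exactly, since membership of any prompt in batch $i$ is decided by $f$ applied to that prompt alone, and $D$ and $D'$ contain the same prompts outside of $\bp$), while $D_j$ and $D_j'$ are themselves neighbors: $D_j' = D_j \cup \{\bp\}$. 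So the only coordinate of $\widehat M$ whose input distribution changes between $D$ and $D'$ is the $j$th.

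The next step is to bound the R\'enyi divergence of order $\alpha$ between $\widehat M(D)$ and $\widehat M(D')$. Since $M$ is applied to the batches independently (using independent randomness), $\widehat M(D)$ is a product distribution $\bigotimes_{i=1}^k M(D_i)$ and likewise $\widehat M(D') = \bigotimes_{i=1}^k M(D_i')$. R\'enyi divergence is additive over product distributions, so
\[
D_\alpha(\widehat M(D) \parallel \widehat M(D')) = \sum_{i=1}^k D_\alpha(M(D_i) \parallel M(D_i')).
\]
For $i \neq j$ the two arguments coincide, so that term is $0$; for $i = j$, since $D_j, D_j'$ are neighbors and $M$ satisfies $\rho$-zCDP, the term is at most $\rho\alpha$. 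Hence $D_\alpha(\widehat M(D) \parallel \widehat M(D')) \le \rho\alpha$ for all $\alpha > 1$, which is exactly the definition of $\rho$-zCDP for $\widehat M$. One should handle the symmetric case $D = D' \cup \{\bp\}$ the same way (or simply note that zCDP is required in both directions and the argument is symmetric in the roles of $D$ and $D'$).

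The main subtlety — and the reason the lemma needs stating rather than citing a textbook version — is pinning down that Assumption~\ref{assum:batch} is exactly what makes $D_i = D_i'$ for $i \ne j$ hold deterministically; standard parallel composition assumes a data-independent partition, whereas here the partition is a deterministic function of the dataset, and the proof needs that the only batch affected by adding a single prompt is the one that prompt hashes into. If the batch assignment could depend on the whole dataset (e.g. fixed-size batches via a global shuffle), a single add/remove could cascade into reassigning many prompts across batches, and this clean argument would fail. A minor technical point to state carefully is that the independence of the randomness used by $M$ on different batches is what licenses the product-distribution / additivity-of-R\'enyi-divergence step; this is implicit in the definition of $\widehat M$ but worth making explicit.
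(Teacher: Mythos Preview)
Your proposal is correct and essentially identical to the paper's own proof: both fix neighbors differing by a single prompt $\bp$, observe via Assumption~\ref{assum:batch} that only batch $j=f(\bp)$ changes (and only by add/remove of $\bp$), and then use additivity of R\'enyi divergence over the product distribution to reduce to the $\rho$-zCDP bound on $M$ at batch $j$. Your write-up is a bit more explicit about the independence/product-structure step and the role of Assumption~\ref{assum:batch}, but the argument is the same.
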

\begin{proof} Let $D, D' \in \CD$ be neighboring datasets. Without loss of generality assume $D = D' \cup \{\bp\}$, where $\bp$ is a prompt. There exists $j \in [k]$ such that $D_i = D'_i$ for all $i \neq j$ and $D_j = D'_j \cup \{\bp\}$. We have for all $\alpha > 1$
\begin{align*}
D_\alpha(\widehat{M}(D) \| \widehat{M}(D')) &= \sum_{i=1}^k D_\alpha(M(D_i) \| M(D'_i))\\
&= D_\alpha(M(D_j) \| M(D'_j))\\
&\le \rho\alpha \qedhere
\end{align*}
\end{proof}

\subsection{Sensitivity analysis}
\label{sec:sensitivity}

In this we compute the sensitivity of several functions used in Algorithm \ref{alg:main}. Each function depends on a set of logit vectors. Recall that a logit vector is an element of $\bbR^v$. Let
\[
\ell(Z) = \frac1s \sum_{\bz \in Z} \clip_c(\bz)
\]
where $\clip_c(\cdot)$ was defined in Eq.~\eqref{eq:clip}. Also recall the distance function defined in Eq.~\eqref{eq:distance}:
\[
\mathrm{d}(Z, \bz) = \norm{\frac 1 s \sum_{\bz' \in Z} p_{\bz'} - p_{\bz}}_1
\]
where $p_{\bz} = \softmax(\bz)$.

\begin{lem} \label{lem:sensitivity} The function $\ell$ has sensitivity $\frac{c}{s}$, and for all $\bz \in \bbR^v$, the function $\mathrm{d}(\cdot, \bz)$ has sensitivity $\frac1s$.
\end{lem}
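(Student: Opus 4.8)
The plan is to treat the two functions separately; in both cases the bound drops out of directly expanding what it means for two input sets to be neighbors, followed by an elementary norm inequality. The one structural point to keep in mind throughout is that the normalization in $\ell$ and in $\mathrm{d}(\cdot,\bz)$ is by the \emph{fixed} constant $s$, not by the varying cardinality $|Z|$, so adding a single prompt to the batch contributes exactly $\tfrac1s$ of one (clipped) logit vector, respectively $\tfrac1s$ of one softmax vector.

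\textbf{Sensitivity of $\ell$.} Let $Z,Z'$ be neighbors; without loss of generality $Z = Z' \cup \{\bz_0\}$ for some logit vector $\bz_0 \in \bbR^v$. Then $\ell(Z) - \ell(Z') = \tfrac1s \clip_c(\bz_0)$. The key observation is that $\clip_c$ maps every coordinate into $[-c,c]$: since $\bz_{0,i} - \max_j \bz_{0,j} \le 0$ we get $\clip_c(\bz_0)_i = \max\{-c,\,\bz_{0,i} - \max_j\bz_{0,j} + c\} \le c$, while the outer $\max\{-c,\cdot\}$ forces $\clip_c(\bz_0)_i \ge -c$. Hence $\norm{\ell(Z) - \ell(Z')}_\infty = \tfrac1s\norm{\clip_c(\bz_0)}_\infty \le \tfrac{c}{s}$. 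The bound is attained: taking $\bz_0 = \mathbf{0}$ gives $\clip_c(\mathbf{0}) = (c,\dots,c)$, so the supremum over neighbors equals $\tfrac{c}{s}$.

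\textbf{Sensitivity of $\mathrm{d}(\cdot,\bz)$.} Again let $Z = Z' \cup \{\bz_0\}$ and write $q = \sum_{\bz' \in Z'} p_{\bz'}$, so that $\mathrm{d}(Z',\bz) = \norm{\tfrac1s q - p_\bz}_1$ and $\mathrm{d}(Z,\bz) = \norm{\tfrac1s q + \tfrac1s p_{\bz_0} - p_\bz}_1$. By the reverse triangle inequality for $\norm{\cdot}_1$,
\[
\bigl|\,\mathrm{d}(Z,\bz) - \mathrm{d}(Z',\bz)\,\bigr| \;\le\; \norm{\tfrac1s p_{\bz_0}}_1 \;=\; \tfrac1s\norm{p_{\bz_0}}_1 \;=\; \tfrac1s,
\]
the last equality because $p_{\bz_0} = \softmax(\bz_0)$ is a probability vector. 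Thus the sensitivity is at most $\tfrac1s$, and (for integer $s$) it is exactly $\tfrac1s$: taking $Z'$ to consist of $s$ copies of the fixed logit vector $\bz$ makes $\tfrac1s q = p_\bz$, so $\mathrm{d}(Z',\bz) = 0$ while $\mathrm{d}(Z,\bz) = \tfrac1s$.

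The computation is routine; the only steps that need any care are (i) recognizing that because of the fixed-$s$ normalization the two averaged sums differ by exactly $\tfrac1s$ times the single added vector, and (ii) applying the reverse triangle inequality to collapse the difference of two $\ell_1$ distances into the $\ell_1$ norm of that single term, which is then $\tfrac{c}{s}$ (by the clipping range) or $\tfrac1s$ (since a softmax output sums to one). I anticipate no genuine obstacle.
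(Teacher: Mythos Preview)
Your proof is correct and follows essentially the same route as the paper's: take neighbors $Z = Z' \cup \{\bz_0\}$, note the difference of the two sums is $\tfrac1s$ times a single vector, and bound its $\ell_\infty$-norm by $c$ (via the clipping range) or its $\ell_1$-norm by $1$ (softmax output) after the reverse triangle inequality. Your extra tightness arguments are not in the paper and are not needed for the lemma; the one for $\mathrm{d}$ is slightly off since $Z'$ is a \emph{set} and cannot contain ``$s$ copies'' of $\bz$, but this does not affect the upper-bound proof.
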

\begin{proof} Let $Z, Z' \subseteq \bbR^v$ be neighbors. Let $\tbz \in \bbR^v$ be the logit vector they do not have in common. We have
\[
\norm{\ell(Z) - \ell(Z')}_\infty = \frac1s\norm{\clip_c(\tbz)}_\infty \le \frac{c}{s}.
\]
We also have
\begin{align*}
&\left| \mathrm{d}(Z, \bz) - d(D', \bz) \right|\\
=& \left| \norm{\frac1s \sum_{\bz' \in Z} p_{\bz'} - p_{\bz}}_1 - \norm{\frac1s \sum_{\bz' \in Z'} p_{\bz'} - p_{\bz}}_1\right|\\
\le&\norm{\frac1s \sum_{\bz' \in Z} p_{\bz'} - \frac1s \sum_{\bz' \in Z'} p_{\bz'}}_1\\
=& \norm{\frac1s \bp_{\tbz}}_1\\
=& \frac1s
\end{align*}
where we used the reverse triangle inequality.
\end{proof}

\subsection{Constituent mechanisms}
\label{sec:simpler}

In this section we prove privacy guarantees for several simpler mechanisms that we will later compose together to show that Algorithm \ref{alg:main} is private.

Both Algorithms \ref{alg:privatesampling} and \ref{alg:abovethresholdsampling} accept a sensitive prompt dataset and a token sequence as input. Algorithm \ref{alg:privatesampling} appends a private token to the token sequence, while Algorithm \ref{alg:abovethresholdsampling} appends zero or more public tokens to the token sequence. The operation of both algorithms is governed by the parameters of Algorithm \ref{alg:main} (\emph{e.g.}, temperature, noise level, \emph{etc}).

\begin{algorithm}
\setstretch{1.15}
\caption{\label{alg:privatesampling} Private token generation}
\begin{algorithmic}[1]
\Statex {\bf Input:} Sensitive prompt dataset $D$, initial token sequence $\bx_0$
\Statex {\bf Output:} Token sequence $\bx \in \CX^*$
\State $\bx \gets \bx_0$
\State $Z \gets \{\logits(\bp\bx) : \bp \in D\}$
\State $\bbaz \gets \ell(Z)$
\State $x \sim \softmax(\bbaz / \tau)$
\State Append $x$ to $\bx$
\State \textbf{return} $\bx$.
\end{algorithmic}
\end{algorithm}

\begin{lem} \label{lem:privatesampling} Let $A(D, \bx_0)$ be Algorithm \ref{alg:privatesampling}. For each $\bx_0 \in \CX^*$ the mechanism $M : D \mapsto A(D, \bx_0)$ satisfies $\rho$-zCDP, where $\rho = \frac12(\frac{c}{s\tau})^2$.\end{lem}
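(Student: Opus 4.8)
\textbf{Proof proposal for Lemma \ref{lem:privatesampling}.}

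The plan is to recognize Algorithm \ref{alg:privatesampling} as an instance of the exponential mechanism and invoke the known zCDP bound for it. Concretely, fix $\bx_0 \in \CX^*$ and observe that the only data-dependent quantity in the mechanism is the aggregated clipped logit vector $\bbaz = \ell(Z)$, where $Z = \{\logits(\bp\bx_0) : \bp \in D\}$. The token $x$ is then drawn with probability proportional to $\exp(\bbaz_i / \tau)$, i.e., with probability proportional to $\exp\big(\tfrac{1}{\tau}\, u(D, i)\big)$ for the utility function $u(D, i) := \ell(Z)_i = \tfrac1s \sum_{\bz \in Z}\clip_c(\bz)_i$. This is exactly the exponential mechanism with utility $u$ and inverse-temperature parameter $1/\tau$ (equivalently, privacy parameter $\eps_{\mathrm{EM}} = 2\Delta/\tau$ in the usual parametrization, where $\Delta$ is the sensitivity of $u$).

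The key input is the sensitivity of the utility function. By Lemma \ref{lem:sensitivity}, $\ell$ has $\ell_\infty$-sensitivity $\tfrac{c}{s}$; since neighboring datasets $D, D'$ give neighboring sets of logit vectors $Z, Z'$, we get $|u(D,i) - u(D',i)| \le \norm{\ell(Z) - \ell(Z')}_\infty \le \tfrac{c}{s}$ for every token $i$. So the exponential mechanism here is run with sensitivity $\Delta = c/s$ and temperature $\tau$.

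I would then apply the sharp zCDP bound for the exponential mechanism from \citet{cesar2021bounding} (cited in the paper immediately after Theorem \ref{thm:main}): an exponential mechanism with per-query privacy parameter $\eps_{\mathrm{EM}}$ satisfies $\tfrac18 \eps_{\mathrm{EM}}^2$-zCDP. With $\eps_{\mathrm{EM}} = 2\Delta/\tau = 2c/(s\tau)$, this gives $\tfrac18 \cdot \big(\tfrac{2c}{s\tau}\big)^2 = \tfrac12 \big(\tfrac{c}{s\tau}\big)^2$-zCDP, which is exactly the claimed $\rho$. The only mild subtlety to check is that this holds for \emph{every fixed} $\bx_0$ — but that is immediate since $\bx_0$ is not part of the sensitive input; conditioning on it, the map $D \mapsto A(D,\bx_0)$ is genuinely a single invocation of the exponential mechanism, and the add/remove neighbor relation on $D$ induces the add/remove relation on $Z$ as required by Lemma \ref{lem:sensitivity}. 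I do not anticipate a real obstacle here; the one place to be careful is matching the parametrization of the exponential mechanism used by \citet{cesar2021bounding} (factor-of-two conventions in the exponent and in the definition of sensitivity) so that the constants line up to give precisely $\tfrac12(\tfrac{c}{s\tau})^2$ rather than a bound off by a constant factor.
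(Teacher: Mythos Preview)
Your proposal is correct and follows essentially the same route as the paper: identify Algorithm~\ref{alg:privatesampling} as an instance of the exponential mechanism, use Lemma~\ref{lem:sensitivity} to bound the sensitivity of the (scaled) utility, and then apply the $\tfrac18\eps_{\mathrm{EM}}^2$-zCDP bound of \citet{cesar2021bounding}. The only cosmetic difference is that the paper absorbs the temperature into the utility (taking $f=\tfrac1\tau\ell$ with sensitivity $c/(s\tau)$), whereas you keep $u=\ell$ with sensitivity $c/s$ and carry the $\tau$ in $\eps_{\mathrm{EM}}=2\Delta/\tau$; both parametrizations yield the same $\eps_{\mathrm{EM}}=2c/(s\tau)$ and hence the stated $\rho$.
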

\begin{proof} Consider a function $f : \CD \rightarrow \bbR^v$ with sensitivity $\Delta$. By an analysis of the exponential mechanism due to \citet{cesar2021bounding},\footnote{See also \citet{DPorg-exponential-mechanism-bounded-range}.} choosing a token according to the distribution $\softmax(\frac{\varepsilon}{2\Delta})$ satisfies $\frac18\varepsilon^2$-zCDP. Observe that mechanism $M$ is the exponential mechanism with $f = \frac1\tau \ell$, which by Lemma \ref{lem:sensitivity} has sensitivity $\frac{c}{s\tau}$.
\end{proof}

\begin{algorithm}
\setstretch{1.15}
\caption{\label{alg:abovethresholdsampling} Public token generation}
\begin{algorithmic}[1]
\Statex {\bf Input:} Sensitive prompt dataset $D$, initial token sequence $\bx_0$
\Statex {\bf Output:} Token sequence $\bx \in \CX^*$
\State $\bx \gets \bx_0$
\State $\htheta \gets \theta + \textrm{Laplace}(\sigma)$
\While{True}
\State $Z \gets \{\logits(\bp\bx) : \bp \in D\}$
\State $\bz_{\public} \gets \logits(\bp_{\public}\bx)$
\State $\hd \gets \mathrm{d}(Z, \bz_{\public}) + \textrm{Laplace}(2\sigma)$
\If{$\hd \geq \htheta$}
\State Break
\Else
\State $x \sim \softmax(\bz_{\public}/\tau_{\public})$
\State Append $x$ to $\bx$
\EndIf
\EndWhile
\State \textbf{return} $\bx$.
\end{algorithmic}
\end{algorithm}

\begin{lem} \label{lem:abovethresholdsampling} Let $A(D, \bx_0)$ be Algorithm \ref{alg:abovethresholdsampling}. For each $\bx_0 \in \CX^*$ the mechanism $M : D \mapsto A(D, \bx_0)$ satisfies $\rho$-zCDP, where $\rho = \frac{2}{(s\sigma)^2}$.
\end{lem}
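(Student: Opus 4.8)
The plan is to recognize Algorithm \ref{alg:abovethresholdsampling} as an instance of the sparse vector technique (more precisely, the AboveThreshold mechanism) and to invoke its known zCDP guarantee. The structure is exactly that of AboveThreshold: a noisy threshold $\htheta = \theta + \mathrm{Laplace}(\sigma)$ is drawn once; then a sequence of adaptively chosen queries $q_k(D) = \mathrm{d}(Z_k, \bz_{\public,k})$ is evaluated, each perturbed with fresh $\mathrm{Laplace}(2\sigma)$ noise, and compared against $\htheta$; the loop halts the first time a noisy query exceeds the noisy threshold. The only output released is the token sequence $\bx$, which is a deterministic function of the public samples drawn along the way plus the stopping index — but crucially none of the public samples depend on $D$ (they are drawn from $\softmax(\bz_{\public}/\tau_{\public})$, which uses only $\bp_{\public}$), so the information about $D$ flowing into the output is entirely contained in \emph{the number of ``below-threshold'' steps before halting}, i.e., the index at which AboveThreshold fires. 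That is precisely the quantity AboveThreshold is designed to release privately.

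First I would set up the reduction carefully: fix $\bx_0$ and condition on the randomness used to draw public tokens (which is $D$-independent), so that the remaining randomness is just the Laplace noise on the threshold and on the queries. Under this conditioning the sequence of query functions $q_1, q_2, \ldots$ is a fixed (data-independent given the conditioning) sequence of functions of $D$, and by Lemma \ref{lem:sensitivity} each $\mathrm{d}(\cdot, \bz)$ has sensitivity $\frac1s$, so every $q_k$ has $\ell_\infty$-sensitivity at most $\Delta = \frac1s$. The output of $M$ is then a post-processing of the AboveThreshold output (the halting index) together with the conditioned-on public randomness. Second, I would apply the standard privacy analysis of AboveThreshold with threshold noise scale $\sigma$ and query noise scale $2\sigma$: this configuration is the ``$b_1 = \Delta/\eps_0$, $b_2 = 2\Delta/\eps_0$'' split that yields $\eps_0$-DP for AboveThreshold with $\eps_0 = 2\Delta/\sigma = \frac{2}{s\sigma}$. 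Converting pure $\eps_0$-DP to zCDP via the standard bound $\eps_0\text{-DP} \Rightarrow \frac12\eps_0^2$-zCDP gives $\rho = \frac12\left(\frac{2}{s\sigma}\right)^2 = \frac{2}{(s\sigma)^2}$, which is the claimed bound. Finally, post-processing and the conditioning argument (averaging over the public randomness preserves the zCDP bound, since a mixture of distributions each close in Rényi divergence is itself close — or more simply, the public randomness is independent of $D$ and can be folded into the mechanism's internal coins) complete the proof.

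The main obstacle I expect is being precise about the interaction between the \emph{adaptive} choice of queries and the conditioning on public randomness. The queries $q_k$ depend on the prefix $\bx$, which grows by appending public tokens, and those tokens are random — but they are drawn independently of $D$. The clean way to handle this is to note that AboveThreshold's guarantee already holds for adaptively (even adversarially) chosen query streams as long as each query has bounded sensitivity; here the ``adversary'' choosing the next query is the public-token sampler, which uses only $\bp_{\public}$ and its own coins, never $D$. So one should state the reduction as: run AboveThreshold where at round $k$ the analyst supplies $q_k$ computed from $\bx_0$, the public prompt, and the public tokens sampled so far; this is a legitimate adaptive interaction, AboveThreshold is $\frac{2}{s\sigma}$-DP hence $\frac{2}{(s\sigma)^2}$-zCDP against it, and $M$'s output is post-processing of this interaction's transcript. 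A secondary point to get right is that AboveThreshold never ``resets'' its threshold until it fires — matching line 2 of Algorithm \ref{alg:abovethresholdsampling}, where $\htheta$ is drawn once before the loop — and that it releases only the single halting event, not the values of the below-threshold queries, which is indeed the case here since only $\bx$ is returned.
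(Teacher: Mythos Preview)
Your proposal is correct and follows essentially the same approach as the paper: identify Algorithm \ref{alg:abovethresholdsampling} as an instance of AboveThreshold with adaptively chosen queries of sensitivity $\tfrac{1}{s}$ (via Lemma \ref{lem:sensitivity}), obtain $\tfrac{2}{s\sigma}$-DP from the Dwork analysis, and convert to $\tfrac{2}{(s\sigma)^2}$-zCDP via Bun--Steinke. Your treatment of the public-token randomness (conditioning on it and invoking post-processing / the adaptive-adversary guarantee of AboveThreshold) is more explicit than the paper's, which simply asserts that the queries may be chosen adaptively and adversarially; both arrive at the same conclusion.
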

\begin{proof} Observe that mechanism $M$ is an instance of the AboveThrehold mechanism \citep{dwork2009complexity}, which accepts a private dataset, a threshold, and a sequence of queries as input. In each iteration, the AboveThreshold mechanism applies the next query in the sequence to the dataset and compares it to a noisy threshold, and returns the index of the first query that exceeds the threshold. The queries can be chosen adaptively and adversarially. In mechanism $M$, each query is specified by a token sequence $\bx$, and the index of the first query that exceeds the threshold is determined by the length of the returned token sequence. Furthermore, by Lemma \ref{lem:sensitivity} each query has sensitivity $\frac1s$. Thus by the analysis due to \citet{dwork2009complexity}, mechanism $M$ satisfies $\frac{2}{s\sigma}$-differential privacy, which by \citet{bun2016concentrated} implies that mechanism $M$ satisfies $\frac{2}{(s\sigma)^2}$-zCDP.
\end{proof}

\subsection{Putting it all together}
\label{sec:together}

Consider a sequence of iterations of the inner loop of Algorithm \ref{alg:main} in which the value of $t$ (the private token counter) is constant. Observe that the operation of Algorithm \ref{alg:main} during these iterations is equivalent to the sequential composition of Algorithms \ref{alg:privatesampling} and \ref{alg:abovethresholdsampling}, since these iterations generate zero or more public tokens followed by a private token.\footnote{The special treatment of the \texttt{<eos>} token complicates this story a little, but we can always assume that the LLM ignores any tokens before the last \texttt{<eos>} token.} Moreover, there are at most $r$ such sequences of iterations, since $r$ is an upper bound on the private token counter for any batch. By Lemmas \ref{lem:sequential}, \ref{lem:privatesampling} and \ref{lem:abovethresholdsampling} we have that Algorithm \ref{alg:main} applied to a single batch satisfies $\rho$-zCDP (where $\rho$ is specified in the statement of Theorem \ref{thm:main}). And therefore by Assumption \ref{assum:batch} and Lemma \ref{lem:parallel} we have that Algorithm \ref{alg:main} applied to the whole dataset satisfies $\rho$-zCDP. It remains to convert this zCDP guarantee to an $(\eps, \delta)$-differential privacy guarantee, which we do two different ways using two existing results: Corollary 13 due to \citet{canonne2020discrete} and Lemma 3.5 due to \citet{bun2016concentrated}.

\section{Privacy-equivalent Gaussian noise}
\label{appendix:noise}
Given the average token distribution $\bbap$ in a batch, \citet{tang2024privacypreserving} protect the privacy of $\bbap$ by using the Gaussian mechanism, which achieves $(\varepsilon, \delta)$-differential privacy with $\varepsilon = \frac{\sqrt{2\log(1.25/\delta)}}{s\sigma}$, where $s$ is the batch size and $\sigma$ is the standard deviation of the noise added to each probability in $\bbap$. On the other hand, we use the exponential mechanism to protect the privacy of a sample drawn from $\bbap$, which achieves $\eps$-differential privacy with $\eps = \frac{2c}{s\tau}$, where $c$ is the maximum absolute value of any log-probability in the batch and $\tau$ is the sampling temperature. 

Empirically, we obtained good synthetic data quality with $s = 250$, $\tau = 2$, $c = 10$ and $\delta = 10^{-6}$.

Setting the $\varepsilon$ values equal to each other yields $\sigma = \frac{\tau\sqrt{\log(1.25/\delta)}}{\sqrt{2}c}$, which is the noise level needed for the two mechanisms to have comparable privacy guarantees (setting aside that $\delta > 0$, an omission that only favors the Gaussian mechanism). Plugging in the above parameters yields $\sigma \approx 0.53$.

The analysis in Theorem 8 of \citet{balle2018improving} does not admit a closed-form solution. Instead, we binary search for a solution to:

$$\Phi\left(\frac{\Delta}{2\sigma} - \frac{\varepsilon\sigma}{\Delta}\right) - \exp(\varepsilon)\Phi\left(-\frac{\Delta}{2\sigma} - \frac{\varepsilon \sigma}{\Delta}\right) \leq \delta$$

where $\Phi$ is the Gaussian cdf, $\varepsilon = \frac{2c}{s\tau}$, $\delta = 10^{-6}$, and $\Delta$ is the L2 sensitivity of a vector computed as the average of $s$ user-provided probability vectors, namely $\Delta = 1/s$. This procedure yields $\sigma \approx 0.34$.

Finally, equating the zCDP loss for the exponential mechanism given by $\frac{\varepsilon^2}{8} = \frac{c^2}{2 s^2 \tau^2}$ (\citet{cesar2021bounding}) to that of the Gaussian mechanism given by $\frac{1}{ 2s^2\sigma^2}$ (\citet{bun2016concentrated}), yields $\sigma = 0.2$.

\section{Experiment details}\label{appendix:experiments}

\subsection{Hyperparameter tuning}
There are a significant amount of hyperparameters associated with our approach. See Table \ref{tab:hparams} for a list of the main ones and the values they take. In this section we describe the hyperparameter evaluation procedure, as well as the rationale for our decisions on what hyperparameter settings to couple together or that we altogether avoid running.

\paragraph{Hyperparameter evaluation procedure.} For fine-tuning experiments, we set aside a real validation set consisting of 10\% the real train set. We choose dataset generation parameters based on which resulting dataset induces the the best classifier on this real validation set. However, the process of tuning the classifier itself on synthetic data (choosing the best learning rate and checkpoint) does not use real data --  we do that tuning with synthetic data. This is because the output of our method is a dataset, and its usefulness to train a model includes how well subsets of it can be used for downstream task hyperparameter selection. After identifying the best synthetic dataset in this manner, we run the tuning process based on synthetic data only and report accuracy of the resultant classifier on the real test set.

\paragraph{Hyperparameter choices.} Based on initial experiments, we found that setting $c=10$ and $\tau = 2$ produced well formed text, so we fix $c=10$ and try a low temperature ($\tau=1.5$) and a high temperature ($\tau=2.25$) setting. At $\tau=2.25$, we observed text degeneration. This is due to the combination of Gemma's large vocabulary (256K) and clipping, which raises the ``probability floor'' of nonsense tokens. So for $\tau=2.25$ settings only, we follow \citet{tang2024privacypreserving} and reduce the vocabulary to the public prediction's top 1024. We emphasize that (1) we do not do this for any of the other settings of $\tau$, and (2) use a larger value than prior work (they use top 100).

Keeping other parameters fixed and increasing the batch size $s$ decreases $\varepsilon$. At the same time, it raises the amount of compute spent to decode a single example.\footnote{The way we interpret this is that $s$ is a compute multiplier that broadens the search space to include better utility configurations in the low $\varepsilon$ regime. This is analagous to the role of the noise multiplier $\sigma$ in DP-SGD, where the best results at low $\varepsilon$ come from taking more steps at higher noise levels.} Hence our approach for selecting the batch size is based on the following: given a target epsilon and dataset, choose $s$ large enough so that we can hit at least 1K examples at the low temperature setting $\tau=1.5$. When targeting a large $\varepsilon$, choosing large $s$ results in too many tokens to decode at too high of a cost per token.

For the sparse vector hyperparameters, we consider the following paired $(\theta,\sigma)$ settings: $\{(-\infty, -),$ $(0.3, 0.1),$ $(0.5, 0.2), (0.7, 0.3)\}$. The first setting corresponds to no use of the SVT, the next 3 represent different privacy levels per token: moving to the right uses noisier queries (less privacy budget) and more often uses the free public tokens. For large datasets and target $\varepsilon$, we do not run the high privacy settings (too much compute to finish), and for smaller datasets and smaller $\varepsilon$ we omit the settings that do not produce at least 1K examples.

\begin{table}[!ht]
    \small
    \centering
    \begin{tabular}{lll}
    \toprule
    $\alpha$ & Description & Values \\
    \midrule
    \multirow{2}{*}{$s$} & \multirow{2}{*}{batch size} & 127, 255, 511, \\
    & & 1023, 1535, 2047 \\
    \midrule
    $c$ & logits clip bound & 10 \\
    $\tau$ & temperature & 1.5, 2, 2.25 \\
    \midrule
    $\theta$ & SVT threshold  & $-\infty$, 0.3, 0.5, 0.7 \\
    $\sigma$ & SVT noise level  & $-$, 0.1, 0.2, 0.2 \\
    \midrule
    $\tau_\text{public}$ & public temperature & 1.5 \\
    \bottomrule
    \end{tabular}
    \caption{\small Values for hyperparameters explored in this work.}
    \label{tab:hparams}
\end{table}

\subsection{Prompts used}

We report the prompts used for our experiments. Generally, we use the same prompt for private and public predictions, with "\texttt{<text of xxx>}" in the public prompt replaced with an actual private example in the private prompt. The exception is for WikiMoviesJSON (Figures \ref{fig:wiki-movies-json-private-prompt} and \ref{fig:wiki-movies-json-public-prompt}), where the public prompt contains a schema description in place of the example.

\begin{figure*}[!ht]
\small
\centering

\begin{minipage}{0.9\textwidth}
\begin{center}
\begin{tcolorbox}
\begin{lstlisting}[language=Python]
# [User]
Here are texts with News Type: Business.

Text: <text of News Type: Business>

Please give me another one.

# [Assistant]
Text:
\end{lstlisting}
\end{tcolorbox}
\end{center}

\end{minipage}
\caption{\small Generation prompt for AGNews.}
\end{figure*}

\begin{figure*}[!t]
\small
\centering

\begin{minipage}{0.9\textwidth}
\begin{center}
\begin{tcolorbox}
\begin{lstlisting}[language=Python]
# [User]
Here are questions with Answer Type: Entity.

```
Text: <question of Answer Type: Entity>
```

Please give me another one.

# [Assistant]
```
Question:
\end{lstlisting}
\end{tcolorbox}
\end{center}

\end{minipage}
\caption{\small Generation prompt for TREC.}
\end{figure*}

\begin{figure*}[!t]
\small
\centering

\begin{minipage}{0.9\textwidth}
\begin{center}
\begin{tcolorbox}
\begin{lstlisting}[language=Python]
# [User]
Here are entries of Category: School.

Entry: <entry of Category: School>

Please give me another one.

# [Assistant]
Entry:
\end{lstlisting}
\end{tcolorbox}
\end{center}

\end{minipage}
\caption{\small Generation prompt for DBPedia.}
\end{figure*}

\begin{figure*}[!t]
\small
\centering

\begin{minipage}{0.9\textwidth}
\begin{center}
\begin{tcolorbox}
\begin{lstlisting}[language=Python]
# [User]
Give me text about a film and the extracted Phrase about its Director.

Phrase: "josh trank"
Text: "<text containing phrase "josh trank">"

Please give me another Phrase and Text: "josh trank". IMPORTANT: The exact Director phrase "josh trank" must be mentioned in Text.

# [Assistant]
Phrase: "josh trank"
Text: "
\end{lstlisting}
\end{tcolorbox}
\end{center}

\end{minipage}
\caption{\small Generation prompt for MIT-D.}
\end{figure*}

\begin{figure*}[!t]
\small
\centering

\begin{minipage}{0.9\textwidth}
\begin{center}
\begin{tcolorbox}
\begin{lstlisting}[language=Python]
# [User]
Give me text about a film and the extracted Phrase about its Genre.

Phrase "comedy"
Text: "<text containing phrase "comedy">"

Please give me another Phrase and Text. IMPORTANT: The exact Genre phrase "comedy" must be mentioned in Text.

# [Assistant]
Phrase: "comedy"
Text: "
\end{lstlisting}
\end{tcolorbox}
\end{center}

\end{minipage}
\caption{\small Generation prompt for MIT-G.}
\end{figure*}

\begin{figure*}[!t]
\small
\centering

\begin{minipage}{0.9\textwidth}
\begin{center}
\begin{tcolorbox}
\begin{lstlisting}[language=Python]
# [User]
Here are texts with Sentiment: Negative.

Text: <text of Sentiment: Negative>

Please give me another one.

# [Assistant]
Text: 
\end{lstlisting}
\end{tcolorbox}
\end{center}

\end{minipage}
\caption{\small Generation prompt for IMDB.}
\end{figure*}

\begin{figure*}[!t]
\small
\centering

\begin{minipage}{0.9\textwidth}
\begin{center}
\begin{tcolorbox}
\begin{lstlisting}[language=Python]
# [User]
Here are texts with Sentiment: Negative.

Text: <text of Sentiment: Negative>

Please give me another one.

# [Assistant]
Text: 
\end{lstlisting}
\end{tcolorbox}
\end{center}

\end{minipage}
\caption{\small Generation prompt for Yelp.}
\end{figure*}

\begin{figure*}[!t]
\small
\centering
\begin{minipage}{0.9\textwidth}
\begin{center}
\begin{tcolorbox}
\begin{lstlisting}[language=Python]
# [User]
Here is a JSON record:
```
{
  "title": "$50,000 Reward",
  "year": 1924,
  "cast": [
    "Ken Maynard",
    "Esther Ralston"
  ],
  "genres": [
    "Western",
    "Silent"
  ],
  "href": "$50,000_Reward",
  "extract": "$50,000 Reward is a 1924 American silent Western film directed by Clifford S. Elfelt and starring Ken Maynard, Esther Ralston and Bert Lindley."
}
```
Please give me another JSON record that complies with the above schema.

# [Assistant]
```
{

\end{lstlisting}
\end{tcolorbox}
\end{center}
\end{minipage}
\caption{\small Private generation prompt for WikiMoviesJSON.}
\label{fig:wiki-movies-json-private-prompt}
\end{figure*}

\begin{figure*}[!t]
\small
\centering
\begin{minipage}{0.9\textwidth}
\begin{center}
\begin{tcolorbox}
\begin{lstlisting}[language=Python]
# [User]
Here is the schema for a JSON record:
Schema:
```
{
  "title": "str",
  "year": int,
  "cast": [ # list of str
    "str1", # 0 or more total entries
  ],
  "genres": [ # list of str
    "str1", # 0 or more total entries
  ]
  "href": "str", # URL slug, e.g.: Link_to_Page
  "extract": "str"
}
```
Please give me another JSON record that complies with the above schema.

# [Assistant]
```
{
\end{lstlisting}
\end{tcolorbox}
\end{center}
\end{minipage}
\caption{\small Public generation prompt for WikiMoviesJSON.}
\label{fig:wiki-movies-json-public-prompt}
\end{figure*}

\section{Artifacts}
Tables \ref{tab:datasets} and \ref{tab:models} list all artifacts we use in this work. AGNews, TREC, DBPedia, MIT-G, MIT-D, IMDB, and Yelp are all standard academic datasets permissible for research use; we cite their original publications when introduced. WikiMoviesJSON is scraped from Wikipedia data, courtesy of \citep{rust2024wikipediamoviedata}; their work is covered by an MIT license. Wikipedia content is licensed under the Creative Commons Attribution-ShareAlike 4.0 International License (CC BY-SA) and the GNU Free Documentation License (GFDL).

We use open-source models BERT-Base, released by \citep{turc2019}, and Gemma. Our use of Gemma for academic purposes is in accordance of the Gemma terms of use: \url{https://ai.google.dev/gemma/terms}. GPT-3 is accessible for academic purposes under OpenAI's terms of use, which supports educational and research activities. LaMDA 8B is not publically available, but we received sufficient authorization to use it for the academic purposes of this paper.

\section{Compute budget}
Our main experiments for synthetic data generation run on Gemma 1.1 2B IT. A run of synthetic data generation takes between 8-48 accelerator hours. Including exploratory runs and hyperparameter search, the total compute budget for this project is roughly 14,000 accelerator hours.

\end{document}